\DeclareMathAlphabet\mathbb{U}{msb}{m}{n}
\definecolor{Gray}{gray}{0.85}
\newcolumntype{g}{>{\columncolor{Gray}}c}
\def\Rset{\mathbb{R}}
\DeclareMathOperator*{\E}{\mathbb{E}}
\DeclareMathOperator{\sign}{sign}
\newcommand{\nrm}[1]{{\left\vert\kern-0.25ex\left\vert\kern-0.25ex\left\vert #1 
    \right\vert\kern-0.25ex\right\vert\kern-0.25ex\right\vert}}
\newcommand{\cA}{\mathcal{A}}
\newcommand{\cD}{\mathcal{D}}
\newcommand{\sF}{\mathcal{F}}
\newcommand{\cS}{\mathcal{S}}
\newcommand{\cR}{\mathcal{R}}
\newcommand{\sD}{{\mathscr D}}
\newcommand{\sG}{{\mathscr G}}
\newcommand{\sS}{{\mathscr S}}
\newcommand{\bA}{{\mathbf A}}
\newcommand{\bI}{{\mathbf I}}
\newcommand{\bM}{{\mathbf M}}
\newcommand{\bU}{{\mathbf U}}
\newcommand{\bX}{{\mathbf X}}
\newcommand{\be}{{\mathbf e}}
\newcommand{\bu}{{\mathbf u}}
\newcommand{\bv}{{\mathbf v}}
\newcommand{\bw}{{\mathbf w}}
\newcommand{\bx}{{\mathbf x}}
\newcommand{\bz}{{\mathbf z}}
\newcommand{\bsigma}{{\boldsymbol \sigma}}
\newcommand{\R}{\mathfrak R}
\newcommand{\one}{\mathbf{1}}
\newcommand{\du}[1]{{#1}^*}
\newcommand{\h}{\widehat}
\newcommand{\la}{\langle}
\newcommand{\ra}{\rangle}
\newcommand{\set}[2][]{#1 \{ #2 #1 \} }
\newcommand{\ignore}[1]{}
\begin{document}

\title{On the Rademacher Complexity of Linear Hypothesis Sets}

\author{\name Pranjal Awasthi \email pranjalawasthi@google.com \\
       \addr Google Research \& Rutgers University\\
       New York, NY 10011, USA
       \AND
       \name Natalie Frank \email nf1066@nyu.edu \\
       \addr Department of Mathematics\\
       \addr Courant Institute of Mathematical Sciences\\
       New York, NY 10012, USA
       \AND
       \name Mehryar Mohri \email mohri@google.com \\
       \addr Google Research \&\\
       \addr Courant Institute of Mathematical Sciences\\
       New York, NY 10011, USA}

\editor{TBD}

\maketitle

\begin{abstract}
  Linear predictors form a rich class of hypotheses used in a variety
  of learning algorithms. We present a tight analysis of the empirical
  Rademacher complexity of the family of linear hypothesis classes
  with weight vectors bounded in $\ell_p$-norm for any $p \geq
  1$. This provides a tight analysis of generalization using these
  hypothesis sets and helps derive sharp data-dependent learning
  guarantees.  We give both upper and lower bounds on the Rademacher
  complexity of these families and show that our bounds improve upon
  or match existing bounds, which are known only for
  $1 \leq p \leq 2$.

\end{abstract}

\section{Introduction}

Linear predictors form a rich class of hypotheses used in a variety
of learning algorithms, including SVM \citep{CortesVapnik1995},
logistic regression or conditional maximum entropy models
\citep{BergerDellaPietraDellaPietra1996}, ridge regression
\citep{HoerlKennard1970}, and Lasso \citep{Tibshirani1996}.

Different regularizations or $\ell_p$-norm conditions are used to
constrain the family of linear predictors. This short note gives a
sharp analysis of the generalization properties of linear predictors
for arbitrary $\ell_p$-norm upper bound constraints. To do so, we give
tight upper bounds on the empirical Rademacher complexity of these
hypothesis sets which we show are matched by lower bounds, modulo some
constants.

The notion of Rademacher complexity is a general complexity measure
used to derive sharp data-dependent learning guarantees for different
hypothesis sets, including margin bounds, which are key in the
analysis of generalization for classification
\citep{KoltchinskiiPanchenko2002,BartlettMendelson2002,
  MohriRostamizadehTalwalkar2018}. There are known upper bounds on the
Rademacher complexity of linear hypothesis sets for some values of
$p$, including $p = 1$ or $p = 2$
\citep{BartlettMendelson2002,MohriRostamizadehTalwalkar2018}, as well
as $1 < p < 2$ \citep{KakadeSridharantTewari2008}. Our upper bounds on
the empirical Rademacher complexity are tighter than those known for
$1 \leq p < 2$ and match the existing one for $p = 2$. We further give
upper bounds on the Rademacher complexity for other values of $p$
($p > 2$). Our upper bounds are expressed in terms of
$\| \bX^\top \|_{2, p^*}$, where $\bX$ is the matrix whose columns are
the sample points and where $p^*$ conjugate number associated to
$p$. We give matching lower bounds in terms of the same quantity for
all values of $p$, which suggest the key role played by this
quantity in the analysis of complexity.

Much of the results presented here already appeared in 
\citep{AwasthiFrankMohri2020}, in the context of the analysis 
of adversarial Rademacher complexity. Here, we
present a more self-contained and detailed analysis, including the
statement and proof of lower bounds.
In Section~\ref{sec:preliminaries}, we introduce some preliminary
definitions and notation. We present our new upper and lower bounds on
the Rademacher complexity of linear hypothesis sets in
Section~\ref{sec:linear_function_classes} (Theorem~\ref{th:main} and
Theorem~\ref{th:lowerbound}). The proof of the upper bounds is given in
Appendix~\ref{app:main} and that of the lower bounds in
Appendix~\ref{app:lowerbound}. Lastly, in Appendix~\ref{app:compare}
we give a detailed analysis of how our bounds improve upon existing
ones.

\section{Preliminaries}
\label{sec:preliminaries}

We will denote vectors as lowercase bold letters (e.g., $\bx$) and
matrices as uppercase bold (e.g., $\bX$). The all-ones vector is
denote by $\one$. The H\"older conjugate of $p \geq 1$ is denoted by
$p^*$. For a matrix $\bM$, the $(p, q)$-\emph{group norm} is defined
as the $q$-norm of the $p$-norm of the columns of $\bM$, that is
$\| \bM \|_{p, q} = \| (\|\bM_1\|_1, \ldots, \|\bM_d\|_p) \|_q$, where 
$\bM_i$s are the columns of $\bM$. 

Let $\sF$ be a family of functions mapping from $\Rset^d$ to
$\Rset$. Then, the \emph{empirical Rademacher complexity} of $\sF$ for
a sample $\sS = (\bx_1, \ldots \bx_m)$, is defined by
\begin{align}
\label{eq:erc_def}
\h \R_\sS(\sF)
= \E_\bsigma \left[ \sup_{f\in \sF}\frac 1 m \sum_{i=1}^m \sigma_i f(\bx_i) \right],
\end{align}
where $\bsigma = (\sigma_1, \ldots, \sigma_m)$ is a vector of i.i.d.\
Rademacher variables, that is independent uniform random variables
taking values in $\set{-1, +1}$. The \emph{Rademacher complexity} of
$\sF$, $\R_m(\sF)$, is defined as the expectation of this quantity:
$\R_m(\sF) = \E_{\sS \sim \sD^m}[\h \R_\sS(\sF)]$, where $\sD$ is a
distribution over the input space $\Rset^d$.  The empirical Rademacher
complexity is a key data-dependent complexity measure. For a family of
functions $\sF$ taking values in $[0, 1]$, the following learning
guarantee holds: for any $\delta > 0$, with probability at least
$1 - \delta$ over the draw of a sample $S \sim \sD^m$, the following
inequality holds for all $f \in \sF$
\citep{MohriRostamizadehTalwalkar2018}:
\[
\E_{x \sim \sD}[f(x)] \leq \E_{x \sim \sS}[f(x)] + 2 \h \R_\sS(\sF) +
3 \sqrt{\frac{\log \frac{2}{\delta}}{2m}},
\]
where we denote by $\E_{x \sim \sS}[f(x)]$ the empirical average of 
$f$, that is $\E_{x \sim \sS}[f(x)] = \frac{1}{m} \sum_{i = 1}^m
f(x_i)$. A similar inequality holds for the average Rademacher
complexity $\R_m(\sF_p)=\E_{\sS\sim \sD^m} [\h \R_\sS(\sF)]$:
\[
\E_{x \sim \sD}[f(x)] \leq \E_{x \sim \sS}[f(x)] + 2 \R_m(\sF) +
\sqrt{\frac{\log \frac{1}{\delta}}{2m}}.
\]
An important application of these bounds is the derivation of 
margin bounds which are crucial in the analysis of classification.
Fix $\rho > 0$. Then, for any $\delta > 0$, with probability at least
$1 - \delta$ over the draw of a sample $S \sim \sD^m$, the following
inequality holds for all $f \in \sF$
\citep{KoltchinskiiPanchenko2002,MohriRostamizadehTalwalkar2018}:
\begin{align*}
\E_{(x, y) \sim \sD}[1_{y f(x) \leq 0}]
& \leq \E_{(x, y) \sim \sS}\left[\min \left(1, \Big( 1 - \tfrac{y f(x)}{\rho}
  \Big)_+  \right) \right] + \frac {2} \rho \h
\R_\sS(\sF) + 3\sqrt{\frac{\log \frac 2\delta}{2m}}\\
& \leq \frac{1}{m} \sum_{i = 1}^m 1_{y_i f(x_i) \leq \rho} + \frac {2} \rho \h
\R_\sS(\sF) + 3\sqrt{\frac{\log \frac 2\delta}{2m}}.
\end{align*}
Finer margin guarantees were recently presented by
\citet{CortesMohriSuresh2020} in terms of Rademacher complexity
and other complexity measures.
Furthermore, the Rademacher complexity of a hypothesis set
also appears as a lower bound in generalization. As an example,
for a symmetric family of functions $\sG$ taking values in $[-1, +1]$,
the following holds \citep{VaartWellner1996}:
\[
\frac{1}{2} \left[ \R_m(\sG) - \frac{1}{\sqrt{m}} \right]
\leq \E_{\sS \sim \sD^m} \left[ \sup_{f \in \sG} \left| \E_{x \sim \sD}[f(x)] -  \E_{x \sim \sS}[f(x)] \right|
\right] \leq 2 \R_m(\sG).
\]

The hypothesis set we will analyze in this paper is that of 
linear predictors whose weight vector is bounded in $\ell_p$-norm:
\begin{align}
\label{eq:linear_function_class}
\sF_p = \set[\big]{\bx \mapsto \bw \cdot \bx \colon \| \bw \|_p \leq W }.
\end{align}

\section{Empirical Rademacher Complexity of Linear Hypothesis Sets}
\label{sec:linear_function_classes}


The main results of this note are the following upper and lower
bounds on the empirical Rademacher complexity of linear hypothesis
sets.
\begin{restatable}{theorem}{maintheorem}
\label{th:main}
Let
$\sF_p = \set{\bx \mapsto \bw \cdot \bx \, \colon \| \bw \|_p \leq W}$
be a family of linear functions defined over $\Rset^d$ with bounded
weight in $\ell_p$-norm. Then, the empirical Rademacher complexity of
$\sF_p$ for a sample $\sS = (\bx_1, \ldots, \bx_m)$ admits the
following upper bounds:
\begin{align}
\h \R_\sS(\sF_p) 
\leq
\begin{cases}
\frac W m\sqrt{{2\log(2d)}} \, \| {\bX^\top}\|_{2, p^*} & \text{if $p = 1$} \nonumber \\
\frac{\sqrt{2}W}{m} \Bigg[\frac{\Gamma \left( \tfrac{p^* + 1}{2}
  \right)}{\sqrt{\pi}} \Bigg]^{\frac{1}{p^*}} \!\! \| \bX^\top \|_{2, p^*} & \text{if $1 <p \le 2$} \\ 
\frac{W}{m}\| \bX^\top \|_{2, p^*}, & \text{if  $p \ge 2$} \nonumber
\end{cases}
\end{align}
where $\bX$ is the $d \times m$-matrix with $\bx_i$s as
columns: $\bX = [\bx_1 \,\ldots\, \bx_m]$.
Furthermore, the constant factor in the inequality
for the case $1 < p \leq 2$ can be bounded as follows:
\[
e^{-\frac 12}\sqrt{p^*}
\leq \sqrt 2\bigg[\frac{\Gamma( \tfrac{p^* + 1}{2} )}{\sqrt{\pi}} \bigg]^{\frac{1}{p^*}} 
\leq e^{-\frac 12} \sqrt{p^* + 1}.
\]
\end{restatable}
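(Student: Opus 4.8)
The plan is to reduce the entire problem to estimating the expected $\ell_{p^*}$-norm of a Rademacher-weighted combination of the sample points, and then treat the three ranges of $p$ with the moment inequality appropriate to each. Writing $f(\bx) = \bw\cdot\bx$ and using $\sum_{i=1}^m \sigma_i(\bw\cdot\bx_i) = \bw\cdot(\bX\bsigma)$, the definition~\eqref{eq:erc_def} becomes
\[
\h\R_\sS(\sF_p) = \frac 1m\,\E_\bsigma\Big[\sup_{\|\bw\|_p\leq W}\bw\cdot(\bX\bsigma)\Big] = \frac Wm\,\E_\bsigma\big[\|\bX\bsigma\|_{p^*}\big],
\]
where the last equality is the dual characterization of the $\ell_p$-ball, $\sup_{\|\bw\|_p\leq W}\bw\cdot\bv = W\|\bv\|_{p^*}$. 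So everything reduces to bounding $\E_\bsigma[\|\bX\bsigma\|_{p^*}]$ by $\|\bX^\top\|_{2,p^*}$.

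For the two cases with $p^* < \infty$, I would write $\|\bX\bsigma\|_{p^*}^{p^*} = \sum_{j=1}^d |\bu_j\cdot\bsigma|^{p^*}$, where $\bu_j$ is the $j$-th column of $\bX^\top$ (the $j$-th row of $\bX$), and apply Jensen's inequality to the concave map $t\mapsto t^{1/p^*}$ to get $\E_\bsigma[\|\bX\bsigma\|_{p^*}]\leq\big(\sum_j\E_\bsigma|\bu_j\cdot\bsigma|^{p^*}\big)^{1/p^*}$. It then remains to control the single moment $\E_\bsigma|\bu_j\cdot\bsigma|^{p^*}$, using $\E_\bsigma(\bu_j\cdot\bsigma)^2 = \|\bu_j\|_2^2$. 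When $p\geq 2$, so $p^*\leq 2$, concavity of $t\mapsto t^{p^*/2}$ gives $\E_\bsigma|\bu_j\cdot\bsigma|^{p^*}\leq\|\bu_j\|_2^{p^*}$, yielding the bound $\frac Wm\|\bX^\top\|_{2,p^*}$. When $1 < p\leq 2$, so $p^*\geq 2$, I would invoke the sharp Khintchine inequality: the optimal constant in $\E_\bsigma|\bu_j\cdot\bsigma|^{p^*}\leq B_{p^*}^{p^*}\|\bu_j\|_2^{p^*}$ is $B_{p^*} = \sqrt 2[\Gamma(\tfrac{p^*+1}{2})/\sqrt\pi]^{1/p^*}$, the $L^{p^*}$-norm of a standard Gaussian, which substitutes to give exactly the stated middle bound. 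Finally, for $p=1$ ($p^* = \infty$) the Jensen step is unavailable and I would instead use a maximal inequality: each $\bu_j\cdot\bsigma$ is sub-Gaussian with variance proxy $\|\bu_j\|_2^2$ by Hoeffding's lemma, and the standard bound on the expected maximum of $2d$ sub-Gaussian variables gives $\E_\bsigma[\max_j|\bu_j\cdot\bsigma|]\leq\sqrt{2\log(2d)}\,\|\bX^\top\|_{2,\infty}$.

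For the two-sided bound on the constant, I would set $q = p^*\geq 2$ and note that $C_q := \sqrt 2[\Gamma(\tfrac{q+1}{2})/\sqrt\pi]^{1/q}$ is precisely $(\E|g|^q)^{1/q}$ for $g$ a standard Gaussian. Raising the desired inequalities to the $q$-th power, they become equivalent to
\[
e^{-q/2}q^{q/2}\leq\tfrac{2^{q/2}}{\sqrt\pi}\,\Gamma(\tfrac{q+1}{2})\leq e^{-q/2}(q+1)^{q/2}.
\]
Applying Stirling's formula with the explicit remainder $\Gamma(s) = \sqrt{2\pi}\,s^{s-1/2}e^{-s}e^{\mu(s)}$, $0 < \mu(s) < \tfrac{1}{12s}$, at $s = \tfrac{q+1}{2}$, both inequalities reduce to elementary one-variable estimates (an upper bound $e^{\mu(s)}\leq e^{1/2}/\sqrt 2$ and a lower bound $(1 - \tfrac{1}{2s})^{s-1/2}\leq\sqrt 2\,e^{-1/2}$) that hold comfortably for all $s\geq\tfrac 32$, i.e. $q\geq 2$.

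The main obstacle is the middle case $1 < p\leq 2$: the constant there is sharp, so the elementary second-moment estimate no longer suffices and one genuinely needs the best constant in the Khintchine inequality, namely Haagerup's identification of it with the Gaussian $L^{p^*}$-moment. Everything else — the duality reduction, the Jensen step, the maximal inequality for $p=1$, and the Stirling estimates for the constant — is routine once this ingredient is in place.
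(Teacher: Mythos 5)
Your proposal is correct and follows essentially the same route as the paper's proof: the dual-norm reduction to $\frac{W}{m}\E_\bsigma[\|\bX\bsigma\|_{p^*}]$, Jensen followed by Khintchine's inequality with the trivial constant for $p^*\le 2$ and Haagerup's sharp Gaussian-moment constant for $p^*\ge 2$, a maximal inequality over the $2d$ signed rows for $p=1$ (the paper phrases this as Massart's lemma, which is the same bound), and Stirling with explicit remainder for the two-sided estimate on the constant. No gaps.
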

The proof is given in Appendix~\ref{app:main}. Both the statement of
the theorem and its proof first appeared in \citep{AwasthiFrankMohri2020}
in the context of the analysis of adversarial Rademacher
complexity. We present a self-contained analysis in this note to make
the results more easily accessible, as we believe these results are of
a wider interest.  The next theorem is new and provides a lower bound
for $\h \R_\sS(\sF_p)$ which, modulo a constant factor, matches the
upper bounds stated above.

\begin{restatable}{theorem}{lowerboundtheorem}
\label{th:lowerbound}
Let
$\sF_p = \set{\bx \mapsto \bw \cdot \bx \, \colon \| \bw \|_p \leq W}$
be a family of linear functions defined over $\Rset^d$ with bounded
weight in $\ell_p$-norm. Then, the empirical Rademacher complexity of
$\sF_p$ for a sample $\sS = (\bx_1, \ldots, \bx_m)$ admits the
following lower bound, where $\bX = [\bx_1 \,\ldots\, \bx_m]$:
\begin{align}
\h \R_\sS(\sF_p)
\geq
\frac { W}{\sqrt 2 m} \| \bX^\top \|_{2, p^*}.
\end{align}
\end{restatable}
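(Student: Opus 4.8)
The plan is to reduce the Rademacher complexity to the expected dual norm of a Rademacher sum, and then to lower-bound that expectation through Jensen's inequality combined with a sharp scalar Khintchine inequality applied coordinate by coordinate.

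First I would write $\sum_{i=1}^m \sigma_i (\bw \cdot \bx_i) = \bw \cdot \bv$ with $\bv = \sum_{i=1}^m \sigma_i \bx_i$, and use that $\ell_p$ and $\ell_{p^*}$ are dual norms to evaluate the inner supremum exactly, $\sup_{\|\bw\|_p \leq W} \bw \cdot \bv = W \|\bv\|_{p^*}$. This yields the identity $\h\R_\sS(\sF_p) = \frac{W}{m}\,\E_\bsigma\big[\|\bv\|_{p^*}\big]$, valid for every $p \geq 1$, so a single argument will cover all regimes at once. It then suffices to show $\E_\bsigma[\|\bv\|_{p^*}] \geq \frac{1}{\sqrt2}\|\bX^\top\|_{2,p^*}$.

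Next, since $\bw \mapsto \|\bw\|_{p^*}$ is a norm, hence convex, and since $\|\bv\|_{p^*}$ depends only on the entrywise absolute values $|\bv_j|$, Jensen's inequality applied to the nonnegative random vector $(|\bv_1|,\ldots,|\bv_d|)$ gives $\E_\bsigma[\|\bv\|_{p^*}] \geq \big\|(\E_\bsigma|\bv_1|,\ldots,\E_\bsigma|\bv_d|)\big\|_{p^*}$; that is, I may push the expectation inside the norm at the cost of replacing each coordinate by its expected absolute value. The $j$-th coordinate $\bv_j = \sum_{i=1}^m \sigma_i X_{ji}$ is a Rademacher sum whose coefficients form the $j$-th row of $\bX$, so the sharp lower Khintchine inequality $\E_\bsigma\big[\big|\sum_i \sigma_i a_i\big|\big] \geq \frac{1}{\sqrt2}\|a\|_2$ yields $\E_\bsigma|\bv_j| \geq \frac{1}{\sqrt2} r_j$, where $r_j$ is the Euclidean norm of row $j$. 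Finally, monotonicity of $\|\cdot\|_{p^*}$ in the nonnegative coordinates gives $\big\|(\E_\bsigma|\bv_1|,\ldots,\E_\bsigma|\bv_d|)\big\|_{p^*} \geq \frac{1}{\sqrt2}\|(r_1,\ldots,r_d)\|_{p^*}$, and $(r_1,\ldots,r_d)$ is exactly the vector of column $2$-norms of $\bX^\top$ whose $p^*$-norm defines $\|\bX^\top\|_{2,p^*}$, completing the chain.

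The main obstacle is securing the sharp constant $1/\sqrt2$: this forces me to invoke the optimal form of the $L_1$ Khintchine inequality (the extremal constant, attained by two equal coefficients) rather than a generic bound, since any looser scalar constant would propagate into the final estimate and break the claimed matching with the upper bounds of Theorem~\ref{th:main}. A secondary point to verify is the boundary case $p=1$, where $p^* = \infty$: the dual-norm identity, the convexity and monotonicity of $\|\cdot\|_{p^*}$, and the per-coordinate Khintchine step all remain valid for the sup-norm, so no separate treatment is required. The remaining manipulations—evaluating the dual norm and identifying $(r_1,\ldots,r_d)$ with the columns of $\bX^\top$—are routine.
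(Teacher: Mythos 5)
Your proposal is correct and follows essentially the same route as the paper's proof: the dual-norm identity, pushing the expectation inside the $p^*$-norm of the entrywise absolute values (what the paper phrases as norm sub-additivity and you phrase as Jensen), and the sharp coordinatewise lower Khintchine inequality with constant $1/\sqrt{2}$. Your explicit remarks on monotonicity of the norm on nonnegative vectors and on the $p^*=\infty$ boundary case are minor refinements of the same argument, not a different approach.
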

This lower bound is in tight in terms of dependence on sample size $m$ and dimension $d$. The proof is given in Appendix~\ref{app:lowerbound}. The following
corollary presents somewhat looser upper bounds that may be more
convenient in various contexts, such as that of kernel-based
hypothesis sets. The corollary can be derived directly by combining
Theorem~\ref{th:main} and Proposition~\ref{prop:norm_ratio} (see Section~\ref{sec:comparison}). 

\begin{corollary}
  Let
  $\sF_p = \set{\bx \mapsto \bw \cdot \bx \, \colon \| \bw \|_p \leq
    W}$ be a family of linear functions defined over $\Rset^d$ with
  bounded weight in $\ell_p$-norm. Then, the empirical Rademacher
  complexity of $\sF_p$ for a sample $\sS = (\bx_1, \ldots, \bx_m)$
  admits the following
  upper bounds, where $\bX = [\bx_1 \,\ldots\, \bx_m]$:\\
\begin{align*}
\text{for } p = 1, \qquad
\h\R_\sS(\sF_p) & \leq \frac Wm \sqrt {{2\log(2d)}}\|\bX\|_{p^*, 2};\\
\text{for } 1 < p \leq 2, \qquad
  \h\R_\sS(\sF_p) & \leq  e^{-\frac 12}\sqrt{p^*+1}\frac Wm \|\bX^\top\|_{2, p^*}\\
  \h\R_\sS(\sF_p) & \leq e^{-\frac 12}\sqrt{p^*+1}\frac Wm
  \|\bX\|_{p^*,2};\\
\text{for } p \geq 2, \qquad
\h \R_\sS(\sF_p) & \leq \frac W m \|\bX^\top\|_{2, p^*}\\
\h \R_\sS(\sF_p) & \leq \frac {W \min(m,d)^{\frac 1 {p^*}-\frac 12}}m \|\bX\|_{p^*,2}.
\end{align*}
\end{corollary}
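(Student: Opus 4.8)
The plan is to obtain every inequality by substitution: each right-hand side is either a line of Theorem~\ref{th:main} read verbatim, or that line after replacing $\|\bX^\top\|_{2,p^*}$ by $\|\bX\|_{p^*,2}$ via the norm-conversion of Proposition~\ref{prop:norm_ratio}. First I would record what that conversion supplies. Writing $a_{ij}$ for the entries of $\bX$, with the row index $i$ ranging over $[d]$ and the column index $j$ over $[m]$, the two quantities
$\|\bX^\top\|_{2,p^*}=\bigl(\sum_i(\sum_j a_{ij}^2)^{p^*/2}\bigr)^{1/p^*}$ and
$\|\bX\|_{p^*,2}=\bigl(\sum_j(\sum_i|a_{ij}|^{p^*})^{2/p^*}\bigr)^{1/2}$
differ only by swapping the order in which the $\ell_{p^*}$ norm over rows and the $\ell_2$ norm over columns are applied. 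The conversion states that for $1\le p\le 2$ (so that $p^*\ge 2$) one has $\|\bX^\top\|_{2,p^*}\le\|\bX\|_{p^*,2}$, whereas for $p\ge 2$ (so that $p^*\le 2$) one has $\|\bX^\top\|_{2,p^*}\le\min(m,d)^{\frac 1{p^*}-\frac 12}\|\bX\|_{p^*,2}$, where the prefactor is $\ge 1$ precisely because $p^*\le 2$ in this regime.

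Given this, every bound follows at once. The inequalities stated in terms of $\|\bX^\top\|_{2,p^*}$ (the first line for $1<p\le 2$ and for $p\ge 2$) come directly from Theorem~\ref{th:main}: for $p\ge 2$ it is verbatim, while for $1<p\le 2$ the theorem's constant $\sqrt 2\bigl[\Gamma(\tfrac{p^*+1}{2})/\sqrt\pi\bigr]^{1/p^*}$ is replaced by its upper bound $e^{-\frac 12}\sqrt{p^*+1}$, which Theorem~\ref{th:main} itself provides. The inequalities stated in terms of $\|\bX\|_{p^*,2}$ are then obtained by applying the conversion to these starting points. For $p=1$ and $1<p\le 2$ the replacement $\|\bX^\top\|_{2,p^*}\le\|\bX\|_{p^*,2}$ costs nothing, so the single $p=1$ line (starting from the theorem's $\|\bX^\top\|_{2,\infty}$ bound) and the second $1<p\le 2$ line follow immediately; for $p\ge 2$ the replacement $\|\bX^\top\|_{2,p^*}\le\min(m,d)^{1/p^*-1/2}\|\bX\|_{p^*,2}$ introduces exactly the $\min(m,d)^{1/p^*-1/2}$ factor appearing in the last line.

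The only genuine content, and the step I expect to be the main obstacle, is the conversion inequality of Proposition~\ref{prop:norm_ratio} itself. I would prove it in two pieces. The ``free'' direction follows from the generalized Minkowski (mixed-norm) inequality: applying the outer norm with the smaller exponent and the inner norm with the larger exponent always yields the larger value, which gives $\|\bX^\top\|_{2,p^*}\le\|\bX\|_{p^*,2}$ when $p^*\ge 2$ and $\|\bX\|_{p^*,2}\le\|\bX^\top\|_{2,p^*}$ when $p^*\le 2$. For $p^*\le 2$ I would then recover the reverse inequality with its dimension factor through finite-dimensional norm equivalence $\|\bv\|_{p^*}\le n^{1/p^*-1/2}\|\bv\|_2$, applied along two different routes: comparing the outer $\ell_{p^*}$ and $\ell_2$ norms over the $d$ rows and passing through the Frobenius norm yields the factor $d^{1/p^*-1/2}$, while comparing the vector of column $p^*$-norms and passing through the entrywise $\ell_{p^*}$ norm yields $m^{1/p^*-1/2}$; taking the smaller of the two produces $\min(m,d)^{1/p^*-1/2}$. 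The delicate points here are purely bookkeeping: tracking whether $p^*$ lies above or below $2$ so that each norm-monotonicity and each norm-equivalence step points in the correct direction, and verifying that the two routes collapse to the single claimed prefactor.
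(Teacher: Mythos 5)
Your proposal is correct and matches the paper's own derivation: the corollary is obtained exactly by combining Theorem~\ref{th:main} (with the constant bound $c_2(p)\le e^{-\frac 12}\sqrt{p^*+1}$ it already supplies) with the norm conversion of Proposition~\ref{prop:norm_ratio}, applied with $\bM=\bX$, $p=2$, $q=p^*$, and your case analysis of which direction is free versus which incurs the $\min(m,d)^{\frac 1{p^*}-\frac 12}$ factor is the right one. Even your sketch of Proposition~\ref{prop:norm_ratio} itself (Minkowski for the free direction, norm equivalence routed through the entrywise $\ell_2$ and $\ell_{p^*}$ norms for the other) mirrors the paper's proof via Lemma~\ref{lemma:norm_ratio}.
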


\subsection{Discussion}
\label{sec:discussion}

We now make a few remarks about Theorem~\ref{th:main} and present the
proof in Appendix~\ref{app:main}. The theorem states that for any data set, $\h \cR_\sS(\sF_p)$ is a constant times
$\frac 1m \|\bX^\top\|_{2, p^*}$. This is in contrast to the quantity $\|\bX^\top\|_{p^*, 2}$ that appears in the existing analysis available in the literature for linear hypothesis sets~\citep{KakadeSridharantTewari2008}. However, as we will soon see in Theorem~\ref{th:rc_linear_comparison} using $\|\bX^\top\|_{2, p^*}$ always leads to a better upper bound.

 Another interesting aspect of the upper bound is the dimension
 dependence of the constant in front of $\|\bX^\top\|_{2, p^*}$. This
 constant is independent of dimension only for $p > 1$. For $p = 1$,
 the $\sqrt{\log (d)}$ dependence on dimension is tight, which can be
 seen from the correspondence tightness of the maximal inequality and
 thus that of Massart's inequality \citep{BoucheronLugosiMassart2013}.
 We also provide a simple example further illustrating this dependence
 in Appendix~\ref{app:sqrt_log_d}. This observation also explains why
 the constant for $p > 1$ approaches infinity as $p\to 1$: if we had
 that
\[\h \R_\sS(\sF_p)\leq c(p) \|\bX^\top\|_{2, p^*}\] for $p>1$, then by continuity 
\[\h \R_\sS(\sF_1)\leq \lim_{p\to 1} c(p) \|\bX^\top\|_{2,\infty}\]
If $c(p)$ were dimension independent and $\lim_{p\to \infty} c(p)$ were finite, then the constant for $p=1$ would be finite and dimension independent as well. Since we just showed that the constant for $p=1$ must have dimension dependence, we must have that $\lim_{p\to 1} c(p)=\infty$. This observation suggests that finding dimension-dependent constant for $1<p<2$ could greatly improve the upper bound of Theorem~\ref{th:main}. However, our example where the dimension dependence was tight for $p=1$ had $d=2^m$, which is unrealistic for most applications. It's possible that with some reasonable assumption on the relationship between $m$ and $d$, one could find a far better constant for $1<p<2$.

\subsection{Comparison with Previous Work}
\label{sec:comparison}

\begin{figure}[t]
\centering
\begin{tabular}{c@{\hspace{1.5cm}}c}
\includegraphics[scale=.3]{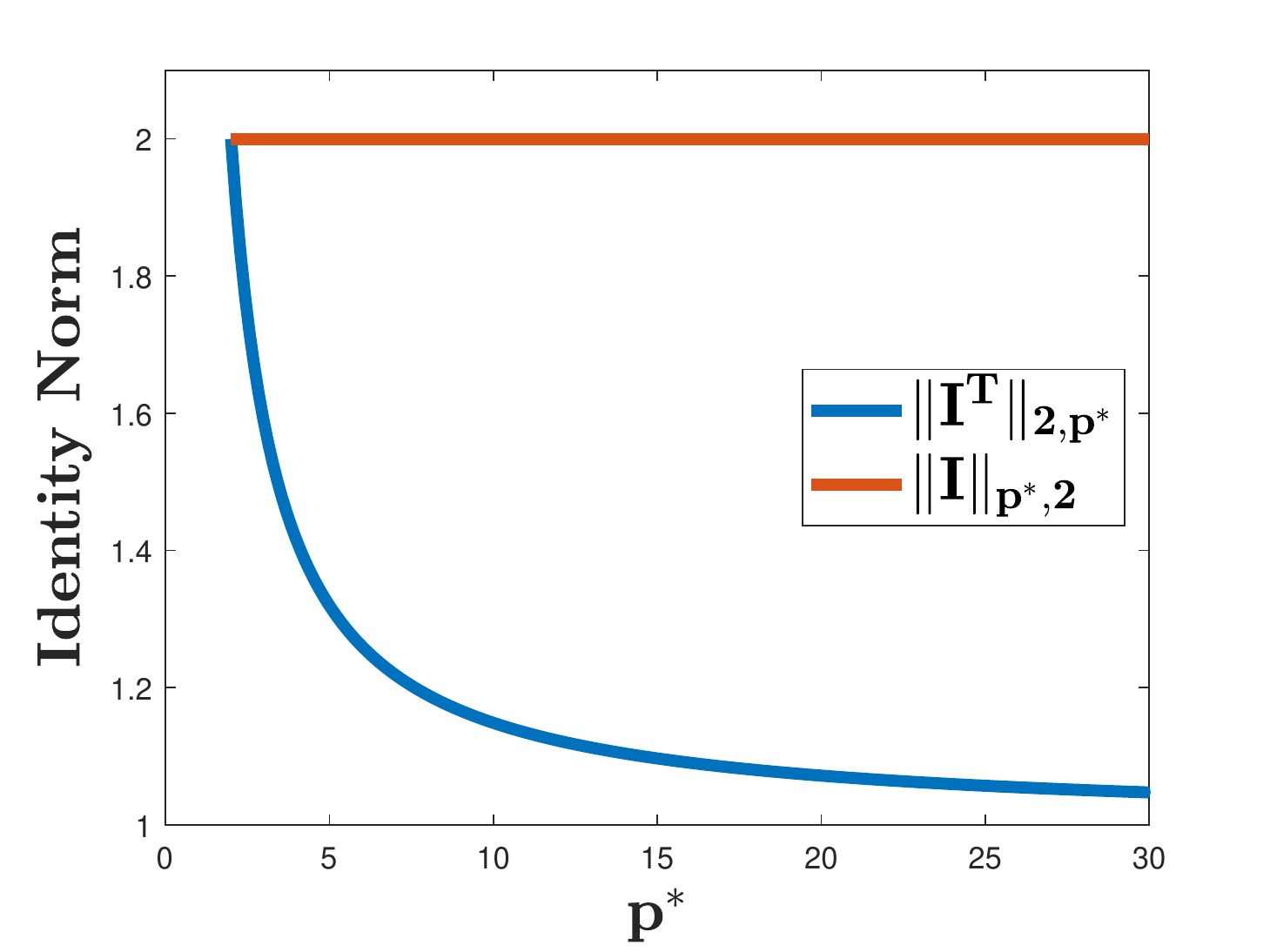}
& \includegraphics[scale=.3]{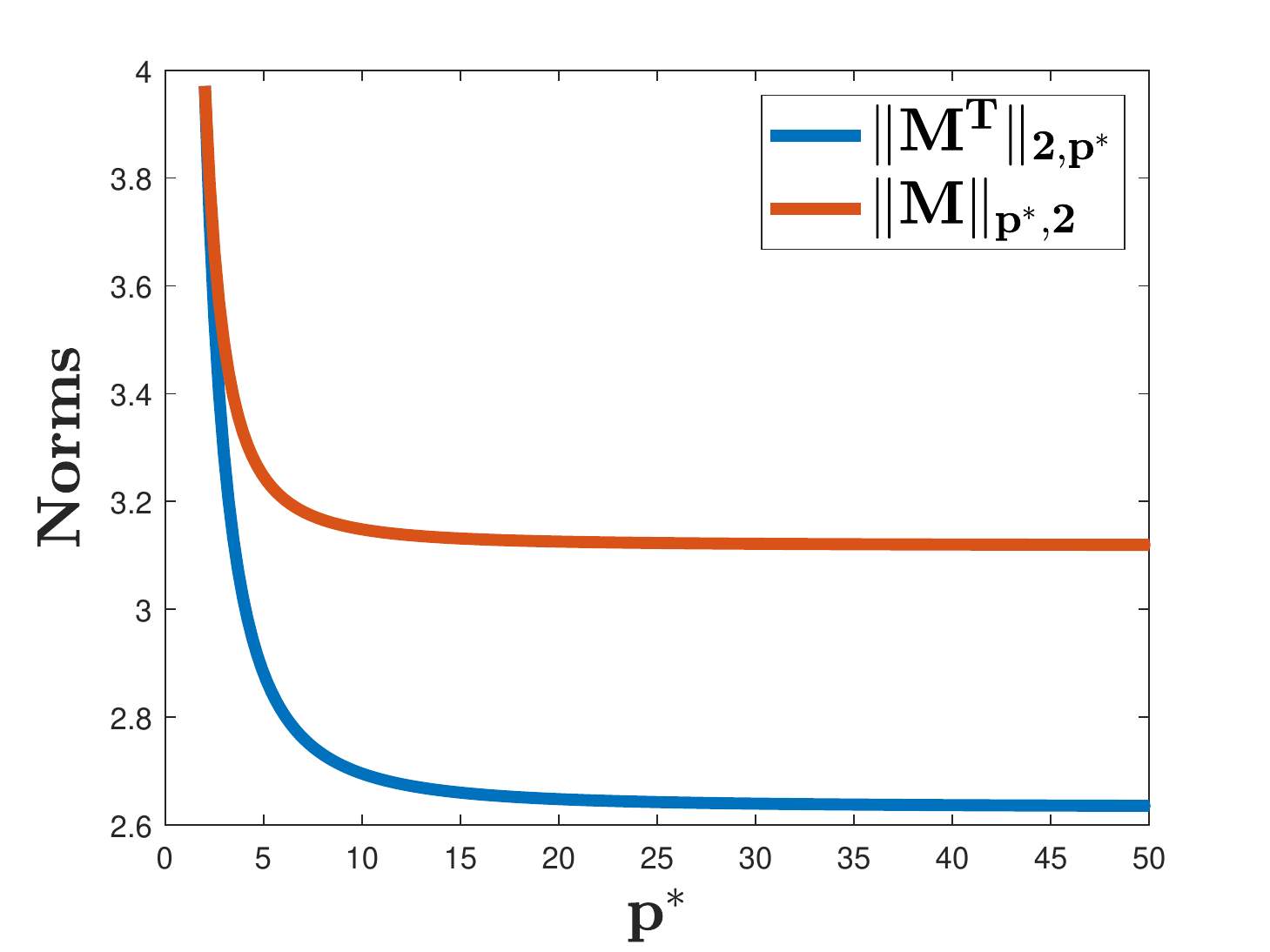}\\
(a) & (b)
\end{tabular}
\vskip -.1in
\caption{\label{fig:norms}(a) A plot comparing two norms of the
$4 \times 4$ identity matrix, $\|\bI^\top\|_{2, p^*}$ and
$\|\bI\|_{p^*,2}$; the lower bound on the ratio of the two norms
(\ref{eq:q_leq_p}) in Proposition~\ref{prop:norm_ratio} holds for
this matrix. (b) Same as (a), but for Gaussian matrices.}
\vskip -.15in
\end{figure}

We are not aware of any existing bound for the empirical Rademacher
complexity of linear hypothesis sets for $p > 2$ before this work. For
other values of $p$, the best existing upper bounds were given by
\cite{KakadeSridharantTewari2008} for $1 <p \leq 2$ and by
\cite{BartlettMendelson2001} (see also
\citep{MohriRostamizadehTalwalkar2018}) for $p = 1$:
\begin{align}
\label{eq:previous_linear_rc}
\h\R_\sS(\sF_p)\leq
\begin{cases}
W \sqrt{\frac {2\log(2d)}m}\| \bX^\top\|_{+\infty, +\infty} & \text{ if  $p=1$} \\ 
\frac W m \sqrt{p^* - 1} \|{\bX}\|_{p^*, 2} & \text{ if $1 < p \leq
  2$} 
\end{cases}
\end{align}
Our new upper bound coincides with (\ref{eq:previous_linear_rc}) when
$p=2$ and is strictly tighter otherwise. Readers familiar with
Rademacher complexity bounds for linear hypothesis sets will notice
that our bound in this case depends on the norm $\| \bX^\top \|_{2, p^*}$.
In contrast, the previously known bounds depend on $\| \bX \|_{p^*,
  2}$. In fact, one can show that the $\|\bX^\top\|_{2,p^*}$ is always
smaller than $\|\bX\|_{p^*, 2}$ for $p \in (1,2]$, that is
$p^*\geq 2$, as shown by the last inequality of \eqref{eq:q_leq_p} in
the following proposition.

\begin{restatable}{proposition}{propnormratioprop}
\label{prop:norm_ratio} 
Let $\bM$ be a $d\times m$ matrix.
If $q\leq p$, then 
\begin{equation}
\label{eq:q_leq_p}
\min(m,d)^{\frac 1p-\frac 1q} \|\bM^\top\|_{p,q} \leq \|\bM\|_{q,p} \leq \|\bM^\top\|_{p,q}\end{equation}
If $q\geq p$, then 
\begin{equation}
\label{eq:p_leq_q}
\min(m,d)^{\frac 1p-\frac 1q} \|\bM^\top\|_{p,q}\geq \|\bM\|_{q,p}\geq \|\bM^\top\|_{p,q}
\end{equation}
These bounds are tight.
\end{restatable}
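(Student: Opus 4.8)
The plan is to read both sides of \eqref{eq:q_leq_p} as \emph{mixed norms} of the same array of entries $\bM = (a_{ij})_{i \in [d],\, j \in [m]}$, aggregated in opposite orders. Writing out the group norms,
\[
\|\bM\|_{q,p} = \Bigg(\sum_{j=1}^m \Big(\sum_{i=1}^d |a_{ij}|^q\Big)^{p/q}\Bigg)^{1/p}, \qquad \|\bM^\top\|_{p,q} = \Bigg(\sum_{i=1}^d \Big(\sum_{j=1}^m |a_{ij}|^p\Big)^{q/p}\Bigg)^{1/q},
\]
so $\|\bM\|_{q,p}$ applies the inner $\ell_q$-norm over rows and the outer $\ell_p$-norm over columns, whereas $\|\bM^\top\|_{p,q}$ interchanges both the exponents and the order of aggregation. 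The whole argument rests on two elementary facts about a vector $v \in \Rset^n$ when $q \le p$: the monotonicity $\|v\|_p \le \|v\|_q$, and the power-mean bound $\|v\|_q \le n^{1/q - 1/p}\|v\|_p$.

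For the right-hand inequality of \eqref{eq:q_leq_p} I would invoke Minkowski's inequality for mixed norms: when $q \le p$, placing the larger exponent on the outside can only decrease the value, which gives exactly $\|\bM\|_{q,p} \le \|\bM^\top\|_{p,q}$. For the left-hand inequality I would produce two separate upper bounds on $\|\bM^\top\|_{p,q}$ in terms of $\|\bM\|_{q,p}$ and then take the smaller. Introducing the flat norm $\|\bM\|_{(p)} = (\sum_{i,j}|a_{ij}|^p)^{1/p}$ as an intermediate, one route changes the \emph{outer} exponent of $\|\bM^\top\|_{p,q}$ from $q$ to $p$ over the $d$ rows via the power-mean bound, losing a factor $d^{1/q-1/p}$, and then relaxes the inner exponent from $p$ back to $q$ by monotonicity (which only increases the value and lands on $\|\bM\|_{q,p}$); this yields $\|\bM^\top\|_{p,q} \le d^{1/q-1/p}\|\bM\|_{q,p}$. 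The symmetric route passes instead through $\|\bM\|_{(q)}$, first relaxing the inner exponent $p \to q$ over the $m$ columns by monotonicity and then changing the outer exponent $q \to p$ by power-mean, losing a factor $m^{1/q-1/p}$. Taking the minimum of the two bounds gives $\min(m,d)^{1/p-1/q}\|\bM^\top\|_{p,q} \le \|\bM\|_{q,p}$.

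The main thing to get right — and the step I expect to be the real obstacle — is that each dimensional factor must be picked up by applying the power-mean inequality at \emph{only one} of the two levels; chaining it at both the row and column levels would produce the far weaker factor $(md)^{1/q-1/p}$ rather than the sharp $\min(m,d)^{1/q-1/p}$, so the flat-norm intermediate and the choice of where to apply monotonicity versus power-mean are essential. The second block \eqref{eq:p_leq_q} then requires no new work: applying \eqref{eq:q_leq_p} to the transpose $\bM^\top$ with the roles of $p$ and $q$ interchanged (legitimate since the hypothesis $q \ge p$ becomes $q' \le p'$, and $\min(m,d)$ is symmetric in the two dimensions) reverses all three inequalities into the stated form. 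Finally, for tightness I would exhibit a matrix with a single nonzero column, for which the two norms coincide and the Minkowski inequality is an equality, together with the partial identity matrix carrying $\min(m,d)$ nonzero diagonal entries, for which the two norms differ by exactly $\min(m,d)^{1/p-1/q}$ and the power-mean step is sharp.
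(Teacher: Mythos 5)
Your proposal is correct and follows essentially the same route as the paper: Minkowski's inequality for mixed norms (which the paper derives directly from the triangle inequality in $\ell^{p/q}$) for the right-hand inequality, two chains through the flat norms $\|\bM\|_{p,p}$ and $\|\bM\|_{q,q}$ — applying the power-mean factor at exactly one level in each chain and taking the minimum — for the left-hand inequality, transposition for the second block, and the same tightness witnesses (a rank-one/all-ones-type matrix and a partial identity). The only cosmetic difference is that you cite the mixed-norm Minkowski inequality as a known fact where the paper writes out its one-line proof.
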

The proof is presented in Appendix~\ref{sec:norm_comparison}. To
visualize the ratio between these two norms, we plot the two norms for
various values of $p^*$ in figure~\ref{fig:norms}.

For convenience, in the
discussion below, we set $c_1(p)=\sqrt{p^*-1}$ and
$c_2(p)=\sqrt 2\big[\frac{\Gamma( \tfrac{p^* + 1}{2} )}{\sqrt{\pi}}
\big]^{\frac{1}{p^*}}$. Regarding the growth of the constant in our bound, Theorem~\ref{th:main} implies that as $p^* \to \infty$,
$c_2(p)$ grows asymptotically like $e^{-\frac 12}\sqrt {p^*} $. Furthermore, $c_2(p) \leq c_1(p)$ in the relevant
region~(See Appendix~\ref{sec:const_bound}). In Figure~\ref{fig:constants} we plot $c_1(p),c_2(p)$ and
the bounds on $c_2(p)$ to illustrate the growth rate of these constants with $p^*$.
\begin{figure}[t]
\centering
\begin{tabular}{cc}
\includegraphics[scale=.3]{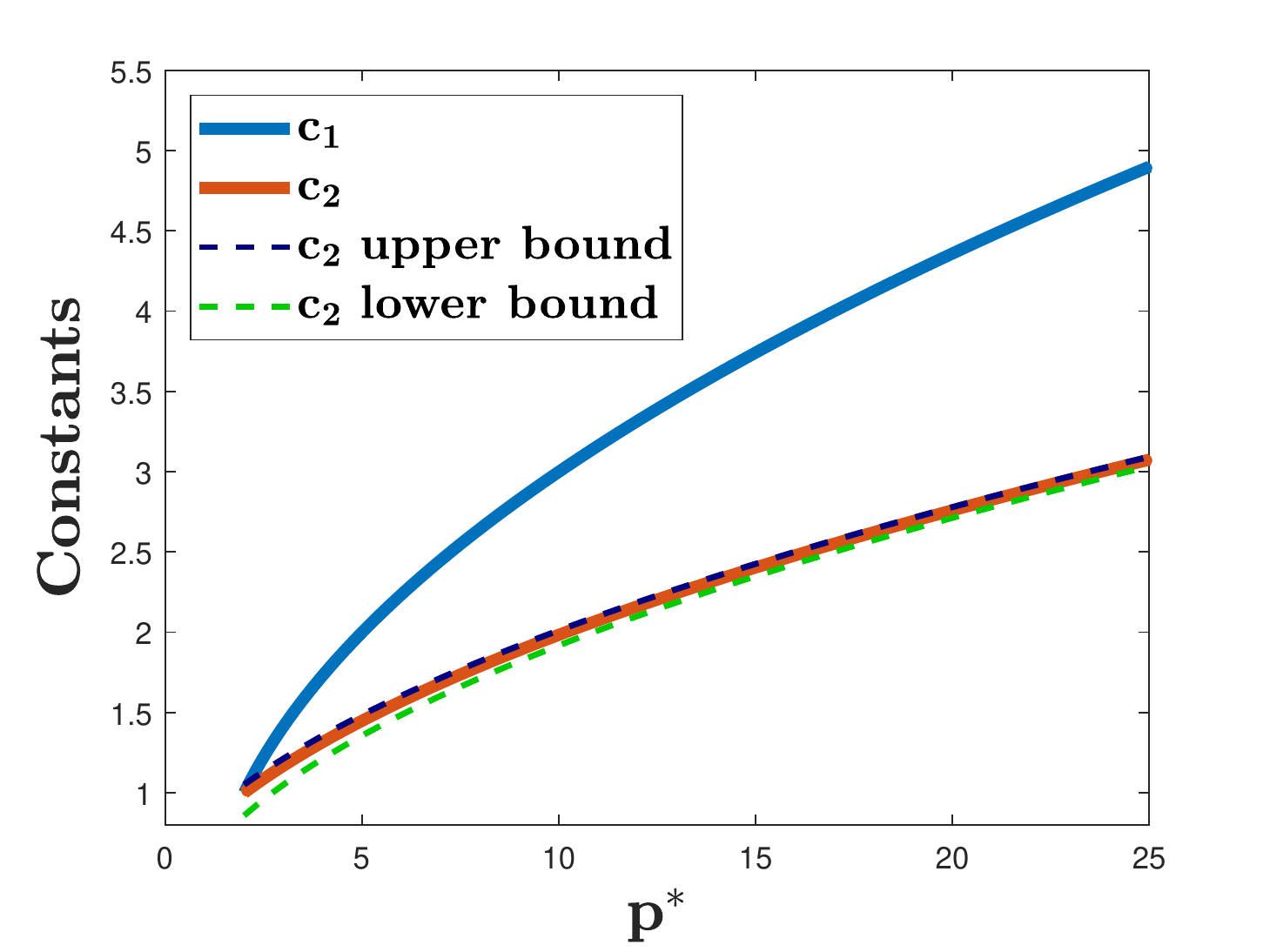}
\end{tabular}
\vskip -.15in
\caption{A plot of $c_1(p)$, $c_2(p)$, and the bounds from
Lemma~\ref{lemma:constant-comparison}. Note that $c_1(2) = c_2(2)$ and
that the upper and lower bounds on $c_2$ are tight.}
\vskip -.15in
\label{fig:constants}
\end{figure}

Proposition~\ref{prop:norm_ratio} and the inequality 
$c_2(p) \leq c_1(p)$ imply the following result.

\begin{restatable}{theorem}{normratioprop}
\label{th:rc_linear_comparison} 
For $p \leq 2$, the following inequality holds:
\[
\frac{\sqrt{2}W}{m} \bigg[\frac{\Gamma( \tfrac{p^* + 1}{2}
  )}{\sqrt{\pi}} \bigg]^{\frac{1}{p^*}} \|{\bX^\top}\|_{2, p^*}\leq \frac
W m \sqrt{p^*-1} \|{\bX}\|_{p^*,2}
\]
\end{restatable}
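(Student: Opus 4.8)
The plan is to recognize that this theorem is not an independent computation but rather the product of two facts that are already in hand: the norm comparison of Proposition~\ref{prop:norm_ratio} and the scalar comparison $c_2(p)\le c_1(p)$. Since both sides of the claimed inequality carry the common positive factor $\frac Wm$, I would first cancel it, reducing the statement to $c_2(p)\,\|\bX^\top\|_{2,p^*}\le c_1(p)\,\|\bX\|_{p^*,2}$, where $c_1(p)=\sqrt{p^*-1}$ and $c_2(p)=\sqrt2\,\big[\Gamma(\tfrac{p^*+1}{2})/\sqrt\pi\big]^{1/p^*}$ are the constants fixed just before the theorem.

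Next I would extract the norm comparison from Proposition~\ref{prop:norm_ratio}. The hypothesis $p\le 2$ is equivalent to $p^*\ge 2$, so I would apply the proposition to the matrix $\bX^\top$ with group-norm indices $(p^*,2)$, which falls into the case $q=2\le p^*=p$ of~\eqref{eq:q_leq_p}. Reading off the last inequality of that chain gives
\[
\|\bX^\top\|_{2,p^*}\;\le\;\|\bX\|_{p^*,2}.
\]
For the scalar factor I would invoke the comparison $c_2(p)\le c_1(p)$ valid for $p\le 2$ (with equality at $p=2$), established in Appendix~\ref{sec:const_bound}. Because all four quantities $c_1(p),c_2(p),\|\bX^\top\|_{2,p^*},\|\bX\|_{p^*,2}$ are nonnegative, I can multiply the two inequalities termwise to obtain $c_2(p)\,\|\bX^\top\|_{2,p^*}\le c_1(p)\,\|\bX\|_{p^*,2}$, and multiplying back by $\frac Wm$ yields exactly the stated bound.

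There is no genuine obstacle inside this proof itself: the statement is a direct corollary once the two ingredients are assembled, and the only point requiring care is the correct index matching in Proposition~\ref{prop:norm_ratio} (identifying that $p^*\ge 2$ selects the relevant case and direction) together with the observation that nonnegativity licenses multiplying the inequalities. The substantive difficulty lives entirely in the cited scalar comparison $c_2(p)\le c_1(p)$: the loose bound $c_2(p)\le e^{-\frac12}\sqrt{p^*+1}$ coming from Theorem~\ref{th:main} is not by itself sufficient, since it only yields $c_2\le c_1$ for $p^*$ bounded away from $2$ and fails on the interval near $p^*=2$, so a self-contained argument would have to reproduce the sharper Gamma-function estimate of Appendix~\ref{sec:const_bound}.
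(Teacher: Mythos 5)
Your proposal is correct and follows essentially the same route as the paper: Theorem~\ref{th:rc_linear_comparison} is obtained by multiplying the norm inequality $\|\bX^\top\|_{2,p^*}\leq\|\bX\|_{p^*,2}$ from Proposition~\ref{prop:norm_ratio} (case $q=2\leq p^*$) with the constant comparison $c_2(p)\leq c_1(p)$ of Lemma~\ref{lemma:constant-comparison}. Your closing remark that the crude bound $c_2(p)\leq e^{-\frac 12}\sqrt{p^*+1}$ fails near $p^*=2$ and that the sharper digamma/Gamma argument is genuinely needed is accurate; note only that the scalar comparison lives in Appendix~\ref{app:compare} (Lemma~\ref{lemma:constant-comparison}) rather than Appendix~\ref{sec:const_bound}.
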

\noindent Thus, for $1 < p \leq 2$, the bound of Theorem~\ref{th:main} is
tighter than \eqref{eq:previous_linear_rc}.

\section{Conclusion}

We presented tight bounds on the empirical Rademacher complexity
of linear hypothesis sets constrained by an $\ell_p$-norm
bound on the weight vector. These bounds can be used to derive
sharp generalization guarantees for these hypothesis sets in a variety
of different contexts, by plugging them in existing 
Rademacher complexity learning bounds.
Our proofs and guarantees suggest an extension beyond $\ell_p$-norm
constrained hypothesis sets that we will discuss elsewhere.

\ignore{
\section{Explicit Generalization Bounds} 

Say that we want to find a learning bound for the function $f(\bx)=\sign(\bw\cdot \bx)$. Let $\ell:\Rset \to [0,c]$ be a loss function with Lipschitz constant $L$. Let $G(p)$ denote
\[G(p)=\begin{cases}
\sqrt{2\log(2d)}\|\bX^\top\|_{2,\infty}&\text{if } p=1\\
  e^{-\frac 12} \sqrt {\du p+1}
  &\text{if }1<p\leq 2\\
  \|\bX^\top\|_{2, \du p} &\text{if $p\geq 2$}
\end{cases}\]
This is the portion of the right-hand side of the first equation in Theorem~\ref{th:main} that depends on $W$ and $p$. Then Theorem~\ref{th:main} and Theorem~\ref{th:margin_bounds} imply that if $\|\bw\|_p\leq W$,
\[R(f)-\h R_\cS(f)\leq \frac{2c L} m WG(p) \] for margin $\rho$ with probability $1-\delta$. Specifically, this equation holds for any $p,W$ with $\|\bw\|_p\leq W$. Hence, to find the best upper bound, we minimize over $W$ and $p$. We have
\begin{align*}
&\inf_{\substack{W,p:\\ \|\bw\|_p\leq W}} W G(p)=\inf_{p\in [1,\infty]} \inf_{W\colon W\geq \|\bw\|_p} WG(p)\\
&=\inf_{p\in[1,\infty]} \|\bw\|_pG(p)\leq\inf_{p\in [2,\infty]}\|\bw\|_p G(p)=\inf_{p\in[2,\infty]}\|\bw\|_p \|\bX^\top\|_{2,\du p}
\end{align*}
\section{A Possible Generalization}
One could also state this problem in a more general framework. Rather than study the Rademacher complexity of the class $\sF_p$, one could take an arbitrary norm $\|\cdot\|$ and consider the class 
\[\sF=\{\bx\mapsto \bw\cdot \bx: \|\bw\|\leq W\}\]
Now we'll define another group norm. Let $\| \cdot \|_*$ be the dual norm of $\| \cdot\| $. For a matrix $\bM$, let $\| \bM\|_{2,*}$ be the $\| \cdot\|$-dual norm of the 2-norm of the columns of $\bM$. Explicitly, 
\[\|{\bM}\|_{2,*}= \left\| \begin{bmatrix} \|\bM_1\|_2&\ldots &\|\bM_m\|_2 \end{bmatrix}\right\|_*\] where the $\bM_i$s are the columns of $\bM$.

The results in this paper suggest that the empirical Rademacher complexity of $\sF$ should scale like $\frac 1m \|\bX^\top\|_{2,*}$, with possibly some dimension dependence. Specifically, we conjecture that for any norm $\|\cdot \|$ there are constants $k_1$, $k_2(d)$ for which 
\[k_1 \frac W m \|\bX^T\|_{2,*}\leq \R_\cS(\sF)\leq k_2(d) \frac W m \|\bX^T\|_{2,*}\]
and $k_2(d)=O(\sqrt{\log (d)}$.
In fact, the proof of the lower bound in Appendix~\ref{app:lowerbound} works for any norm, not just the $p$-norm, so this implies that $k_1$ can be taken as $k_1=\frac 1 {\sqrt 2}$.

\subsection{Norms from Positive Definite Matrices}
Let $\bM$ be a positive definite matrix and define
\[\|\bw\|_\bM
=\sqrt{\la \bM \bw, \bw\ra }\]
\begin{align}
\h \R_\cS(\sF)
& = \frac{1}{m} \E_\bsigma\left[ \sup_{\|\bw\|_\bM \leq W} \bw \cdot \sum_{i=1}^m \sigma_i \bx_i\right]\nonumber\\
& = \frac{1}{m} \E_\bsigma\left[ \left\| \sum_{i=1}^m \sigma_i \bx_i  \right\|_{\bM^{-1}} \right]\nonumber\\
\ignore{
& = \E_\bsigma\left[ \sup_{\|\sqrt \bM \bw\|\leq W} \bw \cdot \frac 1m \sum_{i=1}^m \sigma_i \bx_i\right]\nonumber\\
& = \E_\bsigma\left[ \sup_{\|\bv\|\leq W} \bM^{-\frac 12}\bv \cdot \frac 1m \sum_{i=1}^m \sigma_i \bx_i\right]&\text{(Set $\bv=\sqrt \bM \bw$)}\nonumber\\}
& = \frac{1}{m} \E_\bsigma\left[ \left\| \sum_{i = 1}^m \sigma_i \bM^{-\frac 12} \bx_i \right\|_2 \right] & \text{($\bM^{-\frac 12}$ symmetric)}\nonumber\\
& \leq \frac{W}{m} \| (\bM^{-\frac 12} \bX)^\top\|_{2,2} &(\text{Theorem~\ref{th:main}})\\
& = \frac{W}{m} \| \bX\|_{*,2},
\end{align}
but $\|\bX\|_{*,2}\neq \|\bX^T\|_{2,*}$ in general.

However, we can say that 
\[\|\bX\|_{*,2}\leq \kappa(\bM^{\frac 12}) \|\bX^T\|_{2,*}\] where $\kappa(\bA)$ is the condition number of the matrix $\bA$. 
}

\bibliography{lrad}

\begin{thebibliography}{17}
\providecommand{\natexlab}[1]{#1}
\providecommand{\url}[1]{\texttt{#1}}
\expandafter\ifx\csname urlstyle\endcsname\relax
  \providecommand{\doi}[1]{doi: #1}\else
  \providecommand{\doi}{doi: \begingroup \urlstyle{rm}\Url}\fi

\bibitem[Alzer(1997)]{Alzer1997}
Horst Alzer.
\newblock On some inequalities for the {G}amma and {P}si functions.
\newblock \emph{Math. Comput.}, 66\penalty0 (217):\penalty0 373--389, 1997.

\bibitem[Awasthi et~al.(2020)Awasthi, Frank, and Mohri]{AwasthiFrankMohri2020}
Pranjal Awasthi, Natalie Frank, and Mehryar Mohri.
\newblock Adversarial learning guarantees for linear hypotheses and neural
  networks.
\newblock In \emph{Proceedings of {ICML}}, 2020.

\bibitem[Bartlett and Mendelson(2001)]{BartlettMendelson2001}
Peter~L. Bartlett and Shahar Mendelson.
\newblock Rademacher and {G}aussian complexities: Risk bounds and structural
  results.
\newblock In \emph{Proceedings of {COLT}}, 2001.

\bibitem[Bartlett and Mendelson(2002)]{BartlettMendelson2002}
Peter~L. Bartlett and Shahar Mendelson.
\newblock Rademacher and {G}aussian complexities: Risk bounds and structural
  results.
\newblock \emph{Journal of Machine Learning Research}, 3, 2002.

\bibitem[Berger et~al.(1996)Berger, Pietra, and
  Pietra]{BergerDellaPietraDellaPietra1996}
Adam~L. Berger, Stephen~Della Pietra, and Vincent J.~Della Pietra.
\newblock A maximum entropy approach to natural language processing.
\newblock \emph{Comp. Linguistics}, 22\penalty0 (1), 1996.

\bibitem[Boucheron et~al.(2013)Boucheron, Lugosi, and
  Massart]{BoucheronLugosiMassart2013}
St{\'{e}}phane Boucheron, G{\'{a}}bor Lugosi, and Pascal Massart.
\newblock \emph{Concentration Inequalities - {A} Nonasymptotic Theory of
  Independence}.
\newblock Oxford University Press, 2013.

\bibitem[Cortes and Vapnik(1995)]{CortesVapnik1995}
Corinna Cortes and Vladimir Vapnik.
\newblock Support-vector networks.
\newblock \emph{Mach. Learn.}, 20\penalty0 (3):\penalty0 273--297, 1995.

\bibitem[Cortes et~al.(2020)Cortes, Mohri, and Suresh]{CortesMohriSuresh2020}
Corinna Cortes, Mehryar Mohri, and Ananda~Theertha Suresh.
\newblock Relative deviation margin bounds.
\newblock \emph{CoRR}, abs/2006.14950, 2020.

\bibitem[Haagerup(1981)]{Haagerup1981}
Uffe Haagerup.
\newblock The best constants in the {K}hintchine inequality.
\newblock \emph{Studia Mathematica}, 70:\penalty0 231--283, 1981.

\bibitem[Hoerl and Kennard(1970)]{HoerlKennard1970}
Arthur~E. Hoerl and Robert~W. Kennard.
\newblock Ridge regression: Biased estimation for nonorthogonal problems.
\newblock \emph{Technometrics}, 12\penalty0 (1):\penalty0 55--67, 1970.

\bibitem[Kakade et~al.(2008)Kakade, Sridharan, and
  Tewari]{KakadeSridharantTewari2008}
Sham~M. Kakade, Karthik Sridharan, and Ambuj Tewari.
\newblock On the complexity of linear prediction: Risk bounds, margin bounds,
  and regularization.
\newblock In \emph{Proceedings of NIPS}, pages 793--800, 2008.

\bibitem[Koltchinskii and Panchenko(2002)]{KoltchinskiiPanchenko2002}
Vladmir Koltchinskii and Dmitry Panchenko.
\newblock Empirical margin distributions and bounding the generalization error
  of combined classifiers.
\newblock \emph{Annals of Statistics}, 30, 2002.

\bibitem[Massart(2000)]{Massart2000}
Pascal Massart.
\newblock Some applications of concentration inequalities to statistics.
\newblock \emph{Annales de la Facult\'e des Sciences de Toulouse}, IX:\penalty0
  245--303, 2000.

\bibitem[Mohri et~al.(2018)Mohri, Rostamizadeh, and
  Talwalkar]{MohriRostamizadehTalwalkar2018}
Mehryar Mohri, Afshin Rostamizadeh, and Ameet Talwalkar.
\newblock \emph{Foundations of Machine Learning}.
\newblock The MIT Press, second edition, 2018.

\bibitem[Olver et~al.(2010)Olver, Lozier, Boisvert, and
  Clark]{OlverLozierBoisvertClark2010}
Frank W.~J. Olver, Daniel~W. Lozier, Ronald~F. Boisvert, and Charles~W. Clark.
\newblock \emph{The {NIST} Handbook of Mathematical Functions}.
\newblock Cambridge Univ. Press, 2010.

\bibitem[Tibshirani(1996)]{Tibshirani1996}
Robert Tibshirani.
\newblock Regression shrinkage and selection via the lasso.
\newblock \emph{Journal of the Royal Statistical Society. Series B},
  58\penalty0 (1):\penalty0 267--288, 1996.

\bibitem[van~der Vaart and Wellner(1996)]{VaartWellner1996}
Aad~W. van~der Vaart and Jon~A. Wellner.
\newblock \emph{Weak Convergence and Empirical Processes}.
\newblock Springer, 1996.

\end{thebibliography}

\renewcommand{\theHsection}{A\arabic{section}}
\clearpage
\appendix

\section{Proof of Theorem~\ref{th:main}}
\label{app:main}

In this section, we present the proof of Theorem~\ref{th:main}.

\maintheorem*
The proof proceeds in several steps. First, in
Appendix~\ref{sec:linear_p_1} we upper bound the Rademacher complexity
of $\sF_1$. Next, in Appendix~\ref{sec:new_linear_rc_proof}, we
establish the upper bound for $p > 1$. Lastly, in
Appendix~\ref{sec:const_bound}, we prove the inequalities for the
constant terms in the case $1 < p \leq 2$.

\subsection{Proof of the upper bound, case $p=1$}
\label{sec:linear_p_1}

The bound on the Rademacher complexity for $p=1$ was previously known
but we reproduce the proof of this theorem for completeness. We
closely follow the proof given in
\citep{MohriRostamizadehTalwalkar2018}.
\begin{proof}
For any $i \in [m]$, $x_{ij}$ denotes the $j$th component of
$\bx_i$.
\begin{align*}
\h\R_\sS(\sF_1)  
& = \frac{1}{m} \E_\bsigma \left[ \sup_{\| \bw \|_1 \leq W}
\bw \cdot \sum_{i = 1}^m \sigma_i \bx_i  \right]\\
& = \frac{W}{m} \E_\bsigma \left[ \Big\| \sum_{i = 1}^m
\sigma_i \bx_i  \Big\|_\infty \right] & \text{(by definition of the
dual norm)}\\
& = \frac{W}{m} \E_\bsigma \left[ \max_{j \in [d]} \left| \sum_{i = 1}^m
\sigma_i x_{ij}  \right|  \right] & \text{(by definition of $\| \cdot \|_\infty$)}\\
& = \frac{W}{m} \E_\bsigma \left[ \max_{j \in [d]} \max_{s \in
\set{-1, +1}} s \sum_{i = 1}^m
\sigma_i x_{ij}   \right] & \text{(by definition of $| \cdot |$)}\\
& = \frac{W}{m} \E_\bsigma \left[ \sup_{\bz \in \cA} \sum_{i = 1}^m
\sigma_i z_i  \right] ,
\end{align*}
where $\cA$ denotes the set of $d$ vectors
$\set{s (x_{1j}, \ldots, x_{mj})^\top \colon j \in [d], s \in
\set{-1, +1}}$.  For any $\bz \in A$, we have
$\| \bz \|_2 \leq \sup_{\bz\in A}
\|\bz\|_2=\|\bX^\top\|_{2,\infty}$. Further, $\cA$ contains
        at most $2d$ elements. Thus, by Massart's Lemma
        \citep{Massart2000, MohriRostamizadehTalwalkar2018}, 
\begin{align*}
\h\R_\sS(\sF_1)  
& \leq W  \|\bX^\top\|_{2, \infty} \frac{\sqrt{2 \log (2d)}}{m},
\end{align*}
which concludes the proof.
\end{proof}

\subsection{Proof of upper bound, case $p > 1$}
\label{sec:new_linear_rc_proof}

\begin{proof}
Here again, we use the shorthand
$\bu_\bsigma=\sum_{i = 1}^m \sigma_i\bx_i$. By definition of the dual
norm, we can write:
\begin{align*}
\h\R_\sS(\sF_p)
& = \frac{1}{m} \E_\bsigma \Bigg[ \sup_{\| \bw \|_p \leq W } \bw \cdot \sum_{i = 1}^m
\sigma_i \bx_i \Bigg]\\
& = \frac{W}{m} \E_\bsigma \big[ \| \bu_\bsigma \|_{p^*} \big] & (\text{dual norm property})\\
& \leq \frac{W}{m} \Big[ \E_\bsigma \big[ \| \bu_\bsigma \|_{p^*}^{p^*} \big]\Big]^{\frac{1}{p^*}}. &
(\text{Jensen's inequality, $p^* \in [1, +\infty)$})\\
& = \frac{W}{m} \Big[ \sum_{j = 1}^d
\E_\bsigma \big[ |\bu_{\bsigma, j}|^{p^*} \big] \Big]^{\frac{1}{p^*}}.
\end{align*}
Next, by Khintchine's inequality \citep{Haagerup1981}, the following
holds:
\begin{align*}
\E_\bsigma \big[ |\bu_{\bsigma, j}|^{p^*} \big] &\leq B_{p^*} \Big[ \sum_{i = 1}^m
x_{i, j}^2 \Big]^{\frac{p^*}{2}},
\end{align*}
where $B_{p^*} = 1$ for $p^* \in [1, 2]$ and
\begin{align*}
B_{p^*} &= 2^{\frac{p^*}{2}}\frac{\Gamma \big( \frac{p^* + 1}{2} \big)}{\sqrt{\pi}},
\end{align*}
for $p \in [2, +\infty)$.  This yields the following bound on the
Rademacher complexity:
\[
\h \R_S(\sF_p) \leq 
\begin{cases}
\frac{W}{m}\| \bX^\top \|_{2, p^*} & \text{if }p^* \in [1, 2], \\[.25cm]
\frac{\sqrt{2}W}{m} \bigg[\frac{\Gamma \big( \tfrac{p^* + 1}{2} \big)}{\sqrt{\pi}} \bigg]^{\frac{1}{p^*}} \| \bX^\top \|_{2, p^*} & \text{if } p^* \in [2, +\infty).
\end{cases}
\]
\end{proof}

\subsection{Bounding the Constant}
\label{sec:const_bound}

For convenience, set $c_2(p)\colon = \sqrt 2 \big(\frac{\Gamma(\frac{p^*+1} 2 )}{\sqrt \pi}\big)^\frac 1 {p^*}$.
We establish upper and lower bound on $c_2(p)$.  
\begin{lemma}\label{lemma:const_bound+}
\label{lemma:f2_bound}Let $c_2(p)=\sqrt 2 \big(\frac{\Gamma(\frac{p^*+1} 2 )}{\sqrt \pi}\big)^\frac 1 {p^*}$. Then the following inequalities hold:
\[
e^{-\frac 12}\sqrt{ p^*}\leq c_2(p)\leq  e^{-\frac 12} \sqrt {p^*+1}. 
\]
\end{lemma}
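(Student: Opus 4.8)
The plan is to set $q := p^*$ (recall that in the regime $1 < p \le 2$ where this constant is used one has $q \ge 2$) and to exploit the clean probabilistic identity
\[
c_2(p)^q = 2^{q/2}\,\Gamma\!\big(\tfrac{q+1}{2}\big)/\sqrt\pi = \E\big[|Z|^q\big],
\]
where $Z$ is a standard Gaussian; this already explains heuristically why $c_2(p) = \|Z\|_q$ should grow like $\sqrt{q/e} = e^{-1/2}\sqrt q$. To make this precise, I would raise the three quantities in the claim to the $q$-th power, so that the two desired inequalities become $e^{-q/2}q^{q/2} \le 2^{q/2}\Gamma\!\big(\tfrac{q+1}{2}\big)/\sqrt\pi \le e^{-q/2}(q+1)^{q/2}$. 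Substituting $s = \tfrac{q+1}{2}$ (so that $q = 2s-1$ and $s \ge \tfrac32$) and cancelling the common factor $2^{s-1/2}$, these reduce to the two one-variable Gamma inequalities
\[
\sqrt\pi\,\big(s-\tfrac12\big)^{s-1/2}e^{-(s-1/2)} \;\le\; \Gamma(s) \;\le\; \sqrt{\pi e}\;s^{s-1/2}e^{-s}.
\]

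For both I would invoke the standard non-asymptotic Stirling bounds, valid for all $s > 0$,
\[
\sqrt{2\pi}\,s^{s-1/2}e^{-s} \;<\; \Gamma(s) \;<\; \sqrt{2\pi}\,s^{s-1/2}e^{-s}\,e^{\frac{1}{12s}}.
\]
The upper bound on $c_2(p)$ follows once I check that the Stirling upper bound is dominated by $\sqrt{\pi e}\,s^{s-1/2}e^{-s}$, i.e.\ that $\sqrt2\,e^{1/(12s)} \le \sqrt e$; since the left-hand side is decreasing in $s$, it suffices to verify this at $s = \tfrac32$, a routine numerical check.

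For the lower bound on $c_2(p)$ I would instead compare the Stirling lower bound against the target, reducing the claim to $\sqrt2\,e^{-1/2}\big(1+\tfrac{1}{2u}\big)^{u} \ge 1$ after setting $u = s-\tfrac12 \ge 1$. The structural fact I would use is that $u \mapsto \big(1+\tfrac{1}{2u}\big)^{u}$ is increasing on $u \ge 1$, which I would establish by differentiating $u\log(1+\tfrac{1}{2u})$ and applying the elementary inequality $\log(1+t) \ge t/(1+t)$. Its minimum value $\tfrac32$ at $u = 1$ then gives $\sqrt2\,e^{-1/2}\cdot\tfrac32 > 1$, completing the argument.

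I expect the only real work to be bookkeeping: tracking the substitution $s = (q+1)/2$ and the cancellation of the powers of $2$, and confirming that the two numerical thresholds ($\sqrt2\,e^{1/(12s)}\le\sqrt e$ for $s\ge\tfrac32$, and the monotonicity giving $(1+\tfrac1{2u})^u\ge\tfrac32$ for $u\ge1$) hold on the relevant range $q = p^* \ge 2$. No step presents a genuine conceptual obstacle once the problem is recast through Stirling's bounds; the Gaussian-moment identity serves only for intuition and can be dropped from a fully rigorous write-up.
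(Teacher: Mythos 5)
Your proof is correct and takes essentially the same route as the paper: both arguments rest on the non-asymptotic Stirling bounds $\sqrt{2\pi}\,x^{x-\frac12}e^{-x} < \Gamma(x) < \sqrt{2\pi}\,x^{x-\frac12}e^{-x}e^{\frac{1}{12x}}$ applied at $x = \frac{p^*+1}{2}$, followed by elementary monotonicity checks valid for $q = p^* \geq 2$. The only differences are cosmetic — your substitution $s = \frac{q+1}{2}$ and the particular elementary inequalities used in the final numerical verifications.
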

\begin{proof}
  For convenience, we set $q = p^*$, $f_1(q)=c_1(p)$, $f_2(q)=c_2(p)$.
  Next, we recall a useful inequality
  \citep{OlverLozierBoisvertClark2010} bounding the gamma function:
\begin{align}
1<(2\pi)^{-\frac 12}x^{\frac 12-x}e^x\Gamma(x)<e^{\frac 1 {12x}}.\label{eq:gamma_bound}
\end{align}

We start with the upper bound.
If we apply the right-hand side inequality of (\ref{eq:gamma_bound}) to $\Gamma(\frac {q+1}2)$ we get the following bound on $f_2(q)$:
\begin{equation}
\label{eq:f2_bound}
f_2(q)\leq 2^\frac 1 {2q} e^{-\frac 12} \sqrt{q+1} e^{-\frac 1{2q}+\frac 1 {6(q+1)q}}
\end{equation}
It is easy to verify that, 
\begin{equation}
\label{eq:upper_bound_factor}
2^\frac 1{2q}e^{-\frac 1 {2q} +\frac 1 {6q(q+1)}}=e^{\frac 1q(\frac {\ln 2-1}2+\frac 1 {6q(q+1)})}.
\end{equation}
Furthermore, the expression $(\frac {\ln 2-1}2+\frac 1 {6q(q+1)})$ decreases with increasing $q$. At $q=2$, it is negative, which implies that (\ref{eq:upper_bound_factor}) is less than 1 for $q\geq 2$. Hence
\begin{equation*}
f_2(q) \leq e^{-\frac 12} \sqrt{q+1} 
\end{equation*}

Next, we prove the lower bound.
Applying the lower bound of (\ref{eq:gamma_bound}) to $\Gamma(\frac{q+1}2)$ results in
\begin{equation*}
f_2(q)\geq e^{-\frac 12}\sqrt q \left(e^{-\frac 1{2q}(\log 2 -1)}\sqrt{1+\frac 1 q}\right).
\end{equation*} 
We will establish that $\Big(e^{-\frac 1{2q}(\log 2 -1)}\sqrt{1+\frac 1 q}\Big)\geq 1$, which will complete the proof of the lower bound. We prove this statement by showing that
\[
\left(e^{-\frac 1{2q}(\log 2 -1)}\sqrt{1+\frac 1 q}\right)^2
= e^{-\frac 1 q(\log 2 -1)} \left( 1+\frac 1q \right)\geq 1.
\]
By applying some elementary inequalities
\begin{align*}
e^{-\frac 1 q(\log 2 -1)}\left(1+\frac 1q\right)&\geq \left( \frac 1q(\log 2-1)+1\right) \left( 1 + \frac 1q \right) & (\text{using }e^x\geq 1+x)\\
&=1+\frac 1q \left( \log (2) -\frac{1-\log (2)}q\right)\\
&\geq 1
\end{align*}
The last inequality follows since $\Big( \log (2) -\frac{1-\log (2)}q\Big)$ increases with $q$, and is positive at $q = 2$. 
\end{proof}

\newpage

\section{Proof of Theorem~\ref{th:lowerbound}}
\label{app:lowerbound}

In this section, we prove the lower bound of
Theorem~\ref{th:lowerbound}.

\lowerboundtheorem*

\begin{proof}
For any vector $\bu$, let $|\bu|$ denote the vector derived
from $\bu$ by taking the absolute value of each of its components.
Starting as in the proof of Theorem~\ref{th:main}, using the 
dual norm property, we can write:
\begin{align*}
\h\cR_\sS(\sF_p) 
& =  \E_\bsigma\left[\sup_{\|\bw\|_p\leq W} \bw\cdot \sum_{i = 1}^m \sigma_i\bx_i
   \right]
\\
& = \frac{W}{m} \E_\bsigma\left[\left\| \left| \sum_{i = 1}^m \sigma_i\bx_i
 \right| \right\|_{p^*} \right]& (\text{dual norm property})\\
& \geq \frac{W}{m} \left\| \E_\bsigma\left[ \left| \sum_{i = 1}^m \sigma_i\bx_i
 \right| \right] \right\|_{p^*} 
& (\text{norm sub-additivity})\\
& = \frac{W}{m} \left[\sum_{j = 1}^d 
\left(\E_\bsigma \left[ \left| \sum_{i = 1}^m \sigma_i\bx_{ij}
  \right| \right]\right)^{p^*} \right]^{\frac{1}{p^*}} \\
& \geq \frac{W}{m} \left[\sum_{j = 1}^d 
\left(\frac{1}{\sqrt{2}} \left| \sum_{i = 1}^m \bx_{ij}^2
  \right|^{\frac{1}{2}} \right)^{p^*} \right]^{\frac{1}{p^*}} 
& (\text{Khintchine's ineq. \citep{Haagerup1981}})\\
& = \frac{W}{\sqrt{2} \, m} \left[\sum_{j = 1}^d 
\left[ \left| \sum_{i = 1}^m \bx_{ij}^2
  \right| \right]^{\frac{p^*}{2}} \right]^{\frac{1}{p^*}} \\
& = \frac{W}{\sqrt{2} \, m} \| \bX^\top \|_{2, p^*}.
\end{align*}
\end{proof}

\newpage
\section{Proof of Proposition~\ref{prop:norm_ratio}}
\label{sec:norm_comparison}

In this section, we prove Proposition~\ref{prop:norm_ratio}. 
This result implies that for $p \in (1, 2)$, the group norm
$\|\bX^\top\|_{2, p^*}$, is always a lower bound on the term
$\|\bX\|_{p^*,2}$ that appears in existing upper bounds. 
We first
present a simple lemma helpful for the proof.

\begin{lemma}
\label{lemma:norm_ratio} 
Let $1\leq p,r\leq \infty$ and $d$ be dimension. Then
\[
  \sup_{\| \bw\|_p\leq 1}\| \bw\|_{\du r}=\max(1, d^{1-\frac 1r -\frac
    1p})
\]
\end{lemma}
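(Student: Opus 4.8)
The plan is to recognize this as a standard comparison between two finite-dimensional $\ell_p$-type norms and to reduce it to the classical two-sided inequality relating $\ell_a$ and $\ell_b$ norms on $\Rset^d$. Writing $s = \du r$, so that $\frac 1s = 1 - \frac 1r$, the target exponent becomes $1 - \frac 1r - \frac 1p = \frac 1s - \frac 1p$, and the claim is equivalent to $\sup_{\|\bw\|_p \leq 1} \|\bw\|_s = \max\!\big(1, d^{\frac 1s - \frac 1p}\big)$. The whole statement is thus about how the $\ell_s$-norm of a vector in the unit $\ell_p$-ball can grow.

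The key tool is the elementary finite-dimensional norm comparison: for $1 \leq a \leq b \leq \infty$ and every $\bw \in \Rset^d$,
\[
\|\bw\|_b \leq \|\bw\|_a \leq d^{\frac 1a - \frac 1b}\,\|\bw\|_b.
\]
The left inequality is the monotonicity of $\ell_p$-norms in $p$, and the right one follows from Hölder's inequality applied to $\sum_j |w_j|^a \cdot 1$ with conjugate exponents $b/a$ and $b/(b-a)$, which gives $\|\bw\|_a^a \leq \|\bw\|_b^a\, d^{1 - a/b}$. I would prove (or simply quote) this inequality first and then split into two regimes according to the sign of $\frac 1s - \frac 1p$.

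In the first case $s \geq p$ (equivalently $\frac 1s \leq \frac 1p$), the left inequality above with $a = p$, $b = s$ gives $\|\bw\|_s \leq \|\bw\|_p \leq 1$ on the unit ball, so the supremum is at most $1$; it is attained at any standard basis vector $\be_1$, for which $\|\be_1\|_p = \|\be_1\|_s = 1$. Since the exponent is nonpositive here, $\max\!\big(1, d^{\frac 1s - \frac 1p}\big) = 1$, matching the claim. In the second case $s \leq p$, the right inequality with $a = s$, $b = p$ gives $\|\bw\|_s \leq d^{\frac 1s - \frac 1p}\|\bw\|_p \leq d^{\frac 1s - \frac 1p}$, and tightness is witnessed by the flat vector $\bw = d^{-\frac 1p}\one$, which satisfies $\|\bw\|_p = 1$ and $\|\bw\|_s = d^{\frac 1s - \frac 1p}$; as the exponent is now nonnegative, $\max\!\big(1, d^{\frac 1s - \frac 1p}\big) = d^{\frac 1s - \frac 1p}$, again matching the claim.

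I do not expect a genuine obstacle: the entire content is the standard two-sided norm inequality together with identifying the correct extremizer in each regime — the axis vector $\be_1$ when the target norm is the weaker one and the uniform vector $d^{-1/p}\one$ when it is the stronger one. The only care needed is at the boundary exponents $p \in \{1, \infty\}$ or $r \in \{1, \infty\}$ (so $s \in \{1, \infty\}$), where the Hölder step degenerates; these are handled directly from the definitions of $\|\cdot\|_1$ and $\|\cdot\|_\infty$, or by a continuity argument, and the same two extremizers remain optimal.
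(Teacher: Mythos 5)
Your proof is correct and follows essentially the same route as the paper's: both split into the two regimes according to which of $p$ and $\du r$ is larger, use monotonicity of $\ell_p$-norms with the extremizer $\be_1$ in one regime, and H\"older's inequality with the extremizer $d^{-1/p}\one$ in the other (your two-sided norm comparison is exactly the paper's H\"older step with $\tfrac{1}{\du r} = \tfrac 1p + \tfrac 1s$, just packaged as a quotable inequality). No gaps; the edge cases you flag are handled the same way in both arguments.
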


\begin{proof}
  We prove that, if $p\geq \du r$, then the following equality holds:
  \[
    \sup_{\| \bw\|_p\leq 1}\| \bw\|_{\du r}=d^{1-\frac 1r -\frac 1p},
  \]
  and otherwise that the following holds:
  \[
    \sup_{\| \bw\|_p\leq 1}\| \bw\|_{\du r}= 1.
  \]
  If $p\geq \du r$, by H{\"o}lder's generalized inequality with $\frac 1 {r^*} = \frac 1p + \frac 1s$,
  \[
    \sup_{\| \bw\|_p\leq 1} \| \bw\|_\du r \leq \sup_{\| \bw\| _p \leq 1} \| \one\|_s \| \bw\|_p = \| \one\|_s = d^\frac 1s = d^{\frac 1 {r^*} - \frac 1 p} = d^{1 - \frac 1r - \frac 1p}.
  \]
  Note that equality holds at the vector $\frac 1 {d^\frac 1p}\one$,
  and this implies that the inequality in the line above is an
  equality. Now for $p\leq \du r$, $\| \bw\|_p\geq \| \bw\|_\du r$,
  implying that $\sup_{\| \bw\|_p\leq 1}\| \bw\|_{\du r}\leq 1$. Here,
  equality is achieved at a unit vector $\be_1$.
\end{proof}

\noindent We now present the proof of Proposition~\ref{prop:norm_ratio}.

\propnormratioprop*

\begin{proof} 
First, \eqref{eq:p_leq_q} follows from \eqref{eq:q_leq_p} by
substituting $\bM=\bA^\top$ for a matrix $\bA$: For $q\leq p$,
\[
\min(m,d)^{\frac 1p -\frac 1q} \|\bA\|_{p,q}\leq \|\bA^\top\|_{q,p}\leq
\|\bA\|_{p,q}
\] 
which implies that 
\[
\|\bA^\top\|_{q,p}\leq \|\bA\|_{p, q}\leq \min(m,d)^{\frac 1q-\frac
  1p}\|\bA^\top\|_{q, p}
\]
However, now $p$ and $q$ are swapped in comparison to \eqref{eq:p_leq_q}. Now after swapping them again, for $p\leq q$,
\[
\|\bA^\top\|_{p,q}\leq \|\bA\|_{q,p}\leq \min(m,d)^{\frac 1p-\frac
  1q}\|\bA^\top\|_{p,q}
\]
The rest of this proof will be devoted to showing \eqref{eq:q_leq_p}.

Next, if $p=q$, then $\| \bM\|_{q,p}=\| {\bM^\top}\|_{p,q}$.  For the
rest of the proof, we will assume that $q < p$. Specifically,
$q < +\infty$ which allows us to consider fractions like $\frac pq$.

We will show that for $q < p$, the following inequality holds:
$\|\bM\|_{q, p} \leq \|\bM^\top\|_{p, q}$, or equivalently,
$\|\bM\|^q_{q, p} \leq \|\bM^\top\|^q_{p, q}$.

We will use the shorthand $r = \tfrac{p}{q} > 1$.
By definition of the group norm and using the notation $\bU_{ij} =
|\bM_{ij}|^p$, we can write
\begin{align*}
  \| \bM \|^q_{q, p}
  = \bigg[ \sum_{i = 1}^m \Big[\sum_{j = 1}^d |\bM_{ij}|^q
  \Big]^{\frac{p}{q}} \bigg]^{\frac{q}{p}}
  = \bigg[ \sum_{i = 1}^m \Big[ \sum_{j = 1}^d\bU_{ij}
  \Big]^{r} \bigg]^{\frac{1}{r}} \mspace{-10mu}
  & = \left\| 
  \left[
  \begin{smallmatrix}
    \sum_{j = 1}^d \bU_{1j}\\
    \vdots\\
    \sum_{j = 1}^d \bU_{mj}
  \end{smallmatrix}
  \right]
  \right\|_{r}\\
& \leq \sum_{j = 1}^d \left\| \left[
  \begin{smallmatrix}
    \bU_{1j}\\
    \vdots\\
    \bU_{mj}
  \end{smallmatrix}
  \right] \right\|_r
  = \sum_{j = 1}^d \Big[  \sum_{i = 1}^m|\bM_{ij}|^p
  \Big]^{\frac{q}{p}}
  = \|\bM^\top\|^q_{p, q}.
\end{align*}
To show that this inequality is tight, note that equality holds for an all-ones matrix.
Next, we prove the inequality 
\[
  \min(m, d)^{\frac 1q - \frac 1p} \|\bM^\top\|_{p, q}
  \leq \|\bM\|_{q, p},
\] 
for $q \leq p$.
Applying Lemma~\ref{lemma:norm_ratio} twice gives
\begin{equation}
  \label{eq:lin_app_d} \|\bM^\top \|_{p,q}\leq \| \bM^\top\|_{q,q}=\|\bM\|_{q,q}\leq d^{\frac 1q-\frac 1p}\|\bM\|_{p,q}.
\end{equation}
Again applying Lemma~\ref{lemma:norm_ratio} twice gives
\begin{equation}
  \label{eq:lin_app_m} \|\bM^\top\|_{p,q}\leq m^{\frac 1q-\frac 1p}\|\bM^\top\|_{p,p}=m^{\frac 1q-\frac 1p}\|\bM\|_{p,p}\leq m^{\frac 1q-\frac 1p}\|\bM\|_{p,q}.
\end{equation}
Next, we show that \eqref{eq:lin_app_d} is tight if $d\leq m$ and that \eqref{eq:lin_app_m} is tight if $d \geq m$. If $d\leq m$, the bound is tight for the block matrix
$\bM 
= \left[
\begin{smallmatrix} 
\bI_{d \times d} \ | \ \mathbf 0
\end{smallmatrix}
\right]$,
and, if 
$d\geq m$, then the bound is tight for the block matrix
$\bM 
= \left[
\begin{smallmatrix} \bI_{d\times d} \\[.075cm]
\hline\\
\mathbf 0
\end{smallmatrix}
\right].$
\end{proof}

\newpage
\section{Proof of  Theorem~\ref{th:rc_linear_comparison}}
\label{app:compare}

\normratioprop*

Both Theorem~\ref{th:main} and equation \eqref{eq:previous_linear_rc}
present upper bounds on $\h\R_{\sS}(\sF_p)$ for $1 < p \leq 2$. Both
of these bounds are of the form a constant times a matrix norm of
$\bX$. In Appendix~\ref{sec:norm_comparison}, we compared the two
matrix norms and proved the inequality
$\|\bX^\top\|_{2, p^*}\leq \|\bX\|_{p^*,2}$ in the relevant region
(Lemma~\ref{lemma:norm_ratio}). Here, we compare the two constants and
show that the constant associated with Theorem~\ref{th:main} is
smaller than the one appearing in \eqref{eq:previous_linear_rc}
(Lemma~\ref{lemma:constant-comparison}).  These lemmas combined
directly prove Theorem~\ref{th:rc_linear_comparison}.

In this section, we study the constants in the two known bounds on the
Rademacher complexity of linear classes for $1 < p \leq
2$. Specifically,
\begin{numcases}{\h\R_\sS(\sF_p)\leq}
\frac W m \sqrt{p^*-1} \|{\bX}\|_{p^*,2} 
& \label{eq:previous_linear_rc_2}\\[.25cm] 
\frac{\sqrt{2}W}{m} \bigg[\frac{\Gamma( \tfrac{p^* + 1}{2} )}{\sqrt{\pi}} \bigg]^{\frac{1}{p^*}} \|{\bX^\top}\|_{2, p^*} &  
\label{eq:new_linear_rc}
\end{numcases}
We will compare the constants in equations
(\ref{eq:previous_linear_rc_2}) and (\ref{eq:new_linear_rc}), namely
$\frac{\sqrt 2 W}m \big(\frac{\Gamma(\frac{p^*+1} 2 )}{\sqrt
  \pi}\big)^\frac 1 {p^*} $ and $\frac W m\sqrt{p^*-1}$. Since
$\frac Wm$ divides both of these constants, we drop this factor and
work with the expressions $c_1(p) \colon = \sqrt{p^* - 1}$ and
$c_2(p)\colon = \sqrt 2 \big(\frac{\Gamma(\frac{p^*+1} 2 )}{\sqrt
  \pi}\big)^\frac 1 {p^*}$.

Here we establish our main claim that $c_2(p) \leq c_1(p)$.
\begin{lemma}
\label{lemma:constant-comparison}
Let $c_1(p)=\sqrt{p^*-1}$ and $c_2(p)=\sqrt 2 \big(\frac{\Gamma(\frac{p^*+1} 2 )}{\sqrt \pi}\big)^\frac 1 {p^*}$.
Then
\[
c_2(p) \leq c_1(p),
\]
for all $1 \leq p \leq 2$.
\end{lemma}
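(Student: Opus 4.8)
The plan is to reduce the two-sided comparison of $c_1$ and $c_2$ to a one-variable monotonicity statement, exploiting the fact that equality holds at $p=2$. Writing $q = p^*$, the range $1 < p \le 2$ corresponds to $q \ge 2$ (the endpoint $p=1$, $q=\infty$, being a limit). Raising both sides of $c_2(p) \le c_1(p)$ to the $q$-th power and dividing by $2^{q/2}$, the claim is equivalent to
\[
\frac{\Gamma\!\big(\tfrac{q+1}{2}\big)}{\sqrt\pi} \le \Big(\tfrac{q-1}{2}\Big)^{q/2}, \qquad q \ge 2.
\]
The substitution $t = \tfrac{q-1}{2}$ (so $q \ge 2 \iff t \ge \tfrac12$, while $\tfrac{q+1}{2} = t+1$ and $\tfrac q2 = t+\tfrac12$) turns this into $\Gamma(t+1) \le \sqrt\pi\, t^{\,t+1/2}$, i.e.\ into showing that
\[
\Psi(t) := \Big(t+\tfrac12\Big)\log t + \tfrac12\log\pi - \log\Gamma(t+1) \ge 0
\]
for all $t \ge \tfrac12$. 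First I would record that $\Psi(\tfrac12) = 0$, which is exactly the equality $c_1(2) = c_2(2) = 1$ in disguise.

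Because equality is attained at the left endpoint, I do not expect a direct magnitude bound on $\Gamma$ to suffice; indeed the crude estimate $c_2(p) \le e^{-1/2}\sqrt{q+1}$ of Lemma~\ref{lemma:f2_bound} already fails to give $c_2 \le c_1$ near $q=2$. Instead I would prove that $\Psi$ is nondecreasing on $[\tfrac12,\infty)$, so that $\Psi(t) \ge \Psi(\tfrac12) = 0$. Differentiating,
\[
\Psi'(t) = \log t + 1 + \frac{1}{2t} - \psi(t+1),
\]
where $\psi$ denotes the digamma function, so it remains to establish $\psi(t+1) \le \log t + 1 + \tfrac{1}{2t}$ for $t \ge \tfrac12$.

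For this last step I would invoke the standard digamma estimate $\psi(x) \le \log x - \tfrac1{2x}$, valid for all $x>0$ and provable by the same enveloping-expansion argument behind \eqref{eq:gamma_bound}, applied at $x = t+1$. This reduces the goal to the elementary inequality
\[
\log\!\Big(1 + \tfrac1t\Big) \le 1 + \frac{1}{2t} + \frac{1}{2(t+1)},
\]
which follows from $\log(1+\tfrac1t) \le \tfrac1t$ together with $\tfrac{1}{2t} - \tfrac{1}{2(t+1)} = \tfrac{1}{2t(t+1)} \le 1$, the latter holding for $t \ge \tfrac12$ since then $2t(t+1) \ge \tfrac32$. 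The main obstacle is precisely the tightness at $q=2$: it rules out any direct bound and forces a monotonicity argument, which in turn requires a sharp digamma inequality that stays valid all the way down to the endpoint; once that bound is granted, the remaining calculation is routine.
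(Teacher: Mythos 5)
Your proof is correct, and it shares the paper's core strategy---anchor at the equality point $c_1(2)=c_2(2)$ and establish a monotonicity statement on $q=p^*\geq 2$ using the digamma estimate $\psi(x)\leq \log x - \tfrac{1}{2x}$ (the same inequality the paper cites as \eqref{eq:digamma_bound})---but the execution is genuinely different and, in fact, leaner. The paper compares the derivatives of $f_1(q)=\sqrt{q-1}$ and $f_2(q)=c_2$ directly: it computes $\tfrac{d}{dq}\ln f_2$, bounds it by $\tfrac{1}{2q}$ using both the digamma bound and the lower half of the Stirling-type estimate \eqref{eq:gamma_bound}, and then must multiply back by the magnitude bound $f_2(q)\leq e^{-1/2}\sqrt{q+1}$ from Lemma~\ref{lemma:f2_bound} before comparing with $f_1'(q)=\tfrac{1}{2\sqrt{q-1}}$. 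You instead pass to the single function $\Psi(t)=(t+\tfrac12)\log t+\tfrac12\log\pi-\log\Gamma(t+1)$ with $t=\tfrac{q-1}{2}$, which is exactly $q(\log c_1-\log c_2)$, verify $\Psi(\tfrac12)=0$ via $\Gamma(\tfrac32)=\tfrac{\sqrt\pi}{2}$, and show $\Psi'\geq 0$ from the digamma bound plus the elementary chain $\log(1+\tfrac1t)\leq\tfrac1t$ and $\tfrac{1}{2t(t+1)}\leq 1$ for $t\geq\tfrac12$; all steps check out. What your route buys is self-containment: it needs neither \eqref{eq:gamma_bound} nor Lemma~\ref{lemma:f2_bound}, whereas the paper's argument leans on both. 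Your observation that the crude bound $c_2\leq e^{-1/2}\sqrt{q+1}$ cannot close the gap near $q=2$ (it exceeds $c_1(2)=1$ there) correctly identifies why a monotonicity argument is forced in either version.
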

\begin{proof}
First note that $c_1(2)=c_2(2)$. For convenience, set $q=p^*$, $f_1(q)=c_1(p)$, and $f_2(q)=c_2(p)$. We claim $\frac d {dq} f_1(q)\geq \frac d {dq} f_2(q)$ for $q\geq 2$, and this implies that $c_2(p)\leq c_1(p)$ for $1\leq p\leq 2$. 

The rest of this proof is devoted to showing that $\frac d {dq} f_1(q)\geq \frac d {dq} f_2(q)$. Upon differentiating we get that $f_1'(q)=\frac 1 {2\sqrt{q-1}}$. Next, we will differentiate $f_2$. To start, we state a useful inequality~(see Equation~$2.2$ in \cite{Alzer1997})
bounding the digamma function, $\psi(x)$. 
\begin{align}
\psi(x)\leq \log(x)-\frac 1{2x}
\label{eq:digamma_bound}
\end{align}
Recall that the digamma function is the logarithmic derivative of the
gamma function, $\psi(x) = \frac d {dx} (\log\Gamma(x))=\frac{\Gamma'(x)}{\Gamma(x)}$.
Now we differentiate $\ln f_2$:
\begin{align*}
\frac d {dq} (\ln f_2(q)) 
& =\frac{ \frac q2 \psi(\frac {q+1} 2)-(\ln(\Gamma(\frac {q+1} 2))-\ln(\sqrt \pi))}{q^2}\\
& \leq\frac{ \frac q2(\log(\frac {q+1}2-\frac 1
  {q+1})-(\ln(\Gamma(\frac{q+1}2))-\ln\sqrt \pi)}{q^2} &\,\text{(by (\ref{eq:digamma_bound}))}\\
& \leq \frac {\frac q2(\log \frac {q+1}2 -\frac 1 {q+1})-(\frac 12 \ln2 +\frac q2\log \frac {q+1}2-\frac {q+1}2)}{q^2}&\text{(by the left-hand equality in (\ref{eq:gamma_bound}))}\\
& = \frac 1 {2q}+\frac 1 {q^2}\Big(\frac 1 {2(q+1)}-\frac 12\log2\Big)\\
& \leq \frac 1 {2q}.
\end{align*}
The last line follows since we only consider $q\geq 2$ and $\frac 1 {2(q+1)}-\frac 12\ln 2\leq 0$ in this range.
Finally, the fact that $\frac d {dq}(\ln f_2(q))=f_2'(q)/f_2(q)$ implies 
\begin{align*}
f_2'(q) 
& = f_2(q) \frac d {dq} (\ln f_2(q))\\
& \leq \frac 1 {2q} f_2(q)&\,\text{(by }\frac d {dq}(\ln f_2(q))\leq \frac 1 {2q} \text{)}\\
& \leq \frac{e^{-\frac 12} \sqrt{q+1}}{2q}&\,\text{(by applying the upper bound in Lemma~\ref{lemma:f2_bound})}\\
& =\frac 1 {2\sqrt{q-1}}\frac{e^{-\frac 12}\sqrt{(q+1)(q-1)}}q\\
& \leq e^{-\frac 12} \frac 1{2\sqrt{q-1}}&\,\text{(using }q^2-1\leq q^2)\\
& \leq \frac 1 {2\sqrt{q-1}}=f_1'(q)&\,(\text{using }e^{-\frac 12}<1).
\end{align*} 
\end{proof}
\section{The Tightness of the $\sqrt{\log(d)}$ factor for $p=1$}\label{app:sqrt_log_d}
Here, we provide an example showing that the dimension dependence of $\sqrt{\log(d)}$ in our upper bound on the Rademacher complexity of linear functions bounded in $\ell_1$ norm is tight.

Consider a data set with $d=2^m$. Then the data matrix $\bX$ has $2^m$ rows. We pick the data $\{\bx_i\}$ so that the rows of $\bX$ are the set $\{-1,+1\}^m$. This means that $\|\bX^\top\|_{2, p^*}=\sqrt m$ and we can compute the Rademacher complexity as
\begin{align*}
\h \R_\sS(\sF_1)&=
\frac 1m \E_\bsigma \left[ \sup_{\|\bw\|_1
\leq W}\bw \cdot \sum_{i=1}^m \sigma_i \bx_i\right]=
\frac 1m \E_\bsigma \left[ \left\|\sum_{i=1}^m \sigma_i \bx_i\right\|_\infty \right]&\text{(definition of dual norm)}\\
&=\frac 1m \E_\bsigma\left[ \max_{1\leq j\leq d} \sum_{i=1}^m \sigma_i (\bx_i)_j\right]=
\frac 1m \E_\bsigma\left[ m\right]&\text{(tightness of Cauchy-Schwartz)}\\
&=\frac m m=\frac 1m \sqrt m \sqrt m= \frac 1m \sqrt{\log (d)} \|\bX^\top\|_{2,\infty} &(d=2^m,\|\bX^\top\|_{2,\infty}=\sqrt m) \end{align*} Therefore, the $\sqrt{\log(d)}$ dependence in the constant for $p=1$ is tight.
\end{document}